\title[]{\texttt{FedSysID}: A Federated Approach to Sample-Efficient System Identification}
\newcommand{\fl}{{(i)}}
\newcommand{\E}{\mathbb{E}}
\newtheorem{assume}{Assumption}
\definecolor{amethyst}{rgb}{0.6, 0.4, 0.8}
\author{%
 \Name{Han Wang} \Email{hw2786@columbia.edu}\\
 \addr Columbia University, New York, NY
 \AND
 \Name{Leonardo F. Toso} \Email{lt2879@columbia.edu}\\
 \addr Columbia University, New York, NY%
 \AND
 \Name{James Anderson} \Email{james.andersonc@columbia.edu}\\
 \addr Columbia University, New York, NY
}
\begin{document}

\maketitle

\begin{abstract}
We study the problem of learning a linear system model from the observations of $M$ clients. The catch: Each agent is observing data from a different dynamical system. This work addresses the question of how multiple systems collaboratively learn dynamical models in the presence of heterogeneity. We pose this problem as a federated learning problem and characterize the tension between achievable performance and system heterogeneity. Furthermore, we provide a sample complexity result that obtains a constant factor improvement over the single agent setting. Finally, we describe a meta federated learning algorithm, \texttt{FedSysID}, that leverages existing federated algorithms at the client level. 
\end{abstract}

\begin{keywords}
  Federated learning; System identification; System heterogeneity
\end{keywords}

\section{Introduction}
The system identification problem aims to estimate the parameters of a dynamical system from observed data. The data can take many forms, however, we focus on data sets comprised of a given sequence of input-state trajectories. System identification plays a crucial role in a diverse set of applications including time-series analysis, control theory, robotics, and reinforcement learning. Motivated by trends in big data and the explosion in IoT devices, there has been an increasing interest in finite-sample complexity and non-asymptotic analysis. This line of works can broadly be divided into two categories: i) learning fully observed linear time-invariant (LTI) systems~\citep{sarkar2019near,dean2020sample,simchowitz2018learning};  ii) learning partially observed LTI systems~\citep{,oymak2019non,sun2020finite,zheng2020non,simchowitz2019learning}. Despite the mature body of the literature on the finite-sample properties for system identification, it appears that a non-asymptotic convergence rate of $\frac{1}{\sqrt N}$, where $N$ is the number of observed trajectories, is a hard limit. Such dependence on the number of trajectories places a limit on the scalability of  standard techniques for applications where sampling different trajectories is a distinctly difficult and costly task (e.g., large-scale and safety-critical systems).

To circumvent this apparent fundamental sample-efficiency limit, we propose a modified problem setting where multiple similar agents, orchestrated by a central server, collaborate to learn, from a broad data set, a common model that fits all of them well. To achieve this, we leverage the popular federated learning (FL) paradigm~\citep{konevcny2016federated,mcmahan2017communication}. Federated learning (FL) is a machine-learning setting where multiple clients collaborate to train a model under the coordination of a central server. By allowing data to be stored client-side, the FL paradigm has many favorable privacy properties and achieves almost the same performance guarantees as centralized methods. Canonical examples of FL include building treatment outcome models from patients' medical records (without sharing the sensitive data) and creating predictive text models from an ensemble of cell phone users. Compared to traditional distributed optimization, one key challenge and characteristic of FL is heterogeneity;  both in an  \textit{objective} \emph{and} \textit{statistical} sense. In the system identification problem we pose, this heterogeneity manifests through the fact that each client observes a similar (but not identical) dynamical system.

We formulate an offline federated system identification problem. Our problem assumes a central server connected to $M$ clients. The objective is to identify a linear dynamical system model based on observational data held locally by the $M$ clients. The catch - clients observe trajectories from different systems. Specifically, each client observes $N_i$ trajectories generated by a system, with the assumption that all the systems are ``similar'' to each other. Motivating examples of this scenario include modeling multiple autonomous vehicles that operate in similar dynamic environments, based on observations from a fleet of identical vehicles.  Our objective is to answer the following questions: i) \emph{Through collaboration, can each client achieve an improved finite-time convergence guarantee?} ii) \textit{How does the heterogeneity of the observed systems influence the convergence rate?}

\subsection*{Contributions}

\begin{itemize}
  \item \textbf{Problem formulation:} To the best of our knowledge, this is the first paper to use the FL framework for  system identification. We consider  the case where clients  observe similar, but not identical, dynamical systems. Although we demonstrate our solution technique by focusing on the problem of learning fully observed linear dynamical systems from multiple trajectories, our framework can be easily extended to analyze system identification problems with single trajectory, from partially observed linear dynamical systems~\citep{oymak2019non,WanA22}, or sparse dynamical system models~\citep{fattahi2019learning}. The latter can be solved via \texttt{FedADMM}~\citep{wang2022fedadmm}.

  \item \textbf{Improved convergence rate:} Our solution  provides a  convergence rate that is a constant factor better than if a single agent tries to learn a model independently. In particular, the convergence rate is improved by a factor of $\sqrt{M}$, where $M$ is the number of clients, if each client has the same amount of local data. As the number of agents increases, the benefit from collaboration increases. Since the number of clients in FL is usually large, the benefit from collaboration always shows up.

  \item \textbf{Influence of heterogeneity:} We assume that the systems participating in the collaboration are similar but not identical. We provide a theoretical analysis of how the dissimilarity amongst those systems influences the non-asymptotic convergence rate of the proposed technique.

  \item \textbf{Meta Algorithm:} We present \texttt{FedSysID}, a federated learning algorithm to solve the proposed system identification problem. \texttt{FedSysID} can call any FL optimization method as a subroutine. Furthermore, we analyze the convergence property of this algorithm for the specific case of \texttt{FedAvg}~\citep{mcmahan2017communication} and \texttt{FedLin}~\citep{mitra2021linear}. 
\end{itemize}

\paragraph{Notation:}
Given a matrix $A \in \mathbb{R}^{m\times n}$, the Frobenius norm of $A$ is denoted by $||A||_F= \sqrt{Tr(AA^\top)}$. $||A||$ corresponds to the spectral norm of $A$, namely, $||A||=\sigma_{\text{max}}(A)$, where $\sigma_{\text{max}}(A)$ is the largest singular value of $A$. Consider a symmetric matrix $\Sigma$, $\lambda_{\text{min}}(\Sigma)$ and $\lambda_{\text{max}}(\Sigma)$ denote its minimum and maximum eigenvalues, respectively.

\section{Related Work}
\label{sec:related_works}
\textbf{System identification:} Classically, system identification methods provided asymptotic results, see for example~\citep{ljung1998system}. Unfortunately, these results prove consistency using the central limit theorem and law of large numbers type theorems, thus requiring the number of data samples to tend to infinity. Finite-time (i.e., non-asymptotic) analysis of system identification problems is now an active area of research. Subspace methods~\citep{VerV07,van2012subspace}   are often widely used in practice, however, recent work~\citep{JedP20,sarkar2019near,simchowitz2018learning} has shown that the least squares estimator (that we consider) is nearly-optimal in the fully-observed setting using finite-time analysis. System theoretic properties quantifying  controllability~\citep{dean2020sample,TsiP21} and stability~\citep{sarkar2019near} frequently appear in sample complexity bounds. Broadly, identification techniques can be broken down into two categories, depending on whether they require a single trajectory (most suitable for stable systems) or multiple trajectories (suitable for unstable, requires the ability to reset the system). Upper and lower bounds on the sample complexities for such algorithms are concisely reviewed in~\citep{CSM22}. Specifically, various algorithms are shown to have an $\mathcal O(\frac{1}{\sqrt N})$ upper bound on the sample complexity. Our work is motivated by the results of \citep{xin2022identifying}, who introduced the idea of learning system dynamics by leveraging data from a similar system. However, the authors do not formulate the system identification problem in a federated manner. Moreover, they did not establish a connection between their results and the centralized solution where the system has access to the data sets from all the systems.

\noindent \textbf{Federated learning (FL):} FL is a \emph{distributed} machine learning framework with the objective of training a model from multiple client's data~\citep{Mcmahan17,kairouz2021advances}. Motivated by the proliferation of Internet of Things (IoT) devices, FL aims to: i) \emph{provide privacy} by ensuring data does not leave the client device~\citep{truex19}; ii) \emph{reduce communication costs} by reducing the volume of communication between the clients and the server~\citep{kon16}; iii) gracefully handle \emph{client dropout}~\citep{tran2021feddr,wang2022fedadmm}. Typical FL applications comprise of a massive number of clients. Not all clients may be able to fully participate, for example, a cell phone may lose signal and be forced to drop out; iv) handle data heterogeneity~\citep{sahu2018convergence} -- the distributions characterizing client data sets is likely to vary. Although peer-to-peer model exists in  the architecture of FL~\citep{yang2019federated}, we focus on the classical server-client model where a single server periodically aggregates data from its clients. Clients maintain their own data and cannot communicate directly with each other. Numerous FL algorithms have been developed and analyzed. While our algorithm is independent of a specific FL implementation, we highlight results using the classical ~\texttt{FedAvg}~\citep{mcmahan2017communication} and the more recent \texttt{FedLin}~\citep{mitra2021linear} which can handle data heterogeneity.

\section{Centralized System Identification}
\label{sec:centralized_setting}

Given state trajectory data, the centralized system identification problem is to learn a linear, time-invariant invariant (LTI) system model that best fits the observations. The LTI model takes the form
\begin{equation}\label{eq:sysLTI}
    x_{t+1} = A x_{t} + B u_{t} + w_{t}, \quad t=0,1,2,\hdots, T-1
\end{equation}
where $x_t \in \mathbb{R}^n$, $u_t \in \mathbb{R}^p$ and $w_t \in \mathbb{R}^n$ are the state, input, and process noise of the system at time $t$, respectively. Moreover, it is  assumed that $\{u_t\}_{t=1}^{\infty}, \{w_t\}_{t=1}^{\infty}$ are independent and identically distributed random variables, namely, $u_{t} \stackrel{\text{i.i.d.}}{\sim} \mathcal{N}\left(0, \sigma_{u}^{2} I_{p}\right)$ and $w_{t}\stackrel{\text{i.i.d.}}{\sim} \mathcal{N}\left(0, \sigma_{w}^{2} I_{n}\right)$. It is further  assumed that the initial state $x_{0} \stackrel{\text{i.i.d.}}{\sim} \mathcal{N}\left(0, \sigma_{x}^{2} I_{n}\right)$.  The state-input trajectory pair $\{x_t, u_t\}$ from a single experiment is referred to as a \textit{rollout}. Multiple rollouts of length $T$ are allowed, with the resulting data set stored as
$\left\{x_{l,t}, u_{l,t}\right\}_{t=0}^{T-1}$, for $l=  1,\hdots  N,$
where $l$ denotes $l$-th rollout and $t$ denotes $t$-th time-step in the rollout. We work from the assumption that the data set is generated by~\eqref{eq:sysLTI} and the objective is to provide estimates $(\hat A, \hat B)$ of best fit. We now revisit the standard least-square estimation procedure.

\subsection{Least-Square Procedure}
Denote $\Theta\triangleq \left[\begin{array}{ll} A& B\end{array}\right] \in \mathbb{R}^{n \times (n+p)}$ where $A$ and $B$ correspond to the ground truth system~\eqref{eq:sysLTI} that generated the data. The augmented state input pair of rollout $l$ at time t, is $ z_{l,t} \triangleq\left[\begin{array}{l}
x_{l,t} \\
u_{l,t}
\end{array}\right] \in \mathbb{R}^{n+p}$. Therefore, the state update $x_{l,t+1}$ can be written as
$$x_{l,t+1} = \Theta z_{l,t} + w_{l,t} \quad  \forall \ 1 \le l \le N \text{ and } 0\le t \le T-1,$$ where $x_{l,t}$ can be expanded recursively as follows
$$
x_{l,t}=G_{t}\left[\begin{array}{c}
u_{l,0} \\
\vdots \\
u_{l,t-1}
\end{array}\right]+F_{t}\left[\begin{array}{c}
{w}_{l,0} \\
\vdots \\
{w}_{l,t-1}
\end{array}\right]+{A^t}{x}_{l,0}, \quad t=1,2,\hdots, T-1
$$
with $
G_{t} \triangleq\left[\begin{array}{llll}
A^{t-1} B & A^{t-2} B & \cdots & B
\end{array}\right]$ and
$F_{t} \triangleq \left[\begin{array}{llll}
A^{t-1} & A^{t-2} & \cdots & I_{n}
\end{array}\right] 
$
for all $t \geq 1$.

\begin{lemma}
Let $u_{l,t}, w_{l,t}$, and $x_{l,0}$ be drawn from i.i.d. Gaussian distributions. It holds for ${z}_{l,t} \stackrel{\text{i.i.d.}}{\sim} \mathcal{N}\left(0, {\Sigma}_{t}\right)$, and for all $t\geq 1$, that
$$
{\Sigma}_{t} \triangleq\left[\begin{array}{cc}
\sigma_{{u}}^{2} {G}_{t} ({G}_{t})^{\top}+\sigma_{{w}}^{2} {F}_{t} ({F}_{t})^{\top}+\sigma_{{i,x}}^{2} {A}^{t} ({A}^{t})^{\top} & 0 \\
0 & \sigma_{{u}}^{2} I_{p}
\end{array}\right] \succ 0.
$$
\label{lemma:covariance_matrix}
\begin{proof}
The proof for Lemma~\ref{lemma:covariance_matrix} is given in the Appendix.
\end{proof}
\end{lemma}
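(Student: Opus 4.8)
The plan is to exploit the explicit recursive expansion of $x_{l,t}$ given above, together with the fact that $z_{l,t}$ is an affine image of a jointly Gaussian vector, so that establishing the claim reduces to computing a mean and a covariance. First I would stack the independent primitives $\xi_l \triangleq (u_{l,0}^\top, \ldots, u_{l,t-1}^\top, w_{l,0}^\top, \ldots, w_{l,t-1}^\top, x_{l,0}^\top, u_{l,t}^\top)^\top$, which is zero-mean Gaussian with block-diagonal covariance $\mathrm{blkdiag}(\sigma_u^2 I, \sigma_w^2 I, \sigma_x^2 I_n, \sigma_u^2 I_p)$ by the i.i.d. assumptions on the inputs, noises, and initial state. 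Since $x_{l,t}$ is the stated linear function of the first three blocks and $u_{l,t}$ is exactly the last block, we have $z_{l,t} = L \xi_l$ for an appropriate fixed matrix $L$; hence $z_{l,t}$ is Gaussian with zero mean.

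Next I would compute $\Sigma_t = \E[z_{l,t} z_{l,t}^\top]$ block by block. The lower-right block $\E[u_{l,t} u_{l,t}^\top] = \sigma_u^2 I_p$ is immediate. For the top-left block I would use the mutual independence of the input, noise, and initial-state primitives to kill all cross terms in $\E[x_{l,t} x_{l,t}^\top]$, leaving $\sigma_u^2 G_t G_t^\top + \sigma_w^2 F_t F_t^\top + \sigma_x^2 A^t (A^t)^\top$, which matches the claimed expression. The key structural point is the vanishing of the off-diagonal block $\E[x_{l,t} u_{l,t}^\top]$: because $u_{l,t}$ is drawn i.i.d. in time, it is independent of $u_{l,0}, \ldots, u_{l,t-1}$, of every $w_{l,s}$, and of $x_{l,0}$, hence independent of $x_{l,t}$, so the cross-covariance is zero. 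This delivers the block-diagonal form.

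It then remains to verify $\Sigma_t \succ 0$ and the independence across rollouts. For positive definiteness I would observe that the lower block $\sigma_u^2 I_p \succ 0$, so it suffices to show the top-left block is positive definite; the terms $\sigma_u^2 G_t G_t^\top$ and $\sigma_x^2 A^t (A^t)^\top$ are positive semidefinite, while $\sigma_w^2 F_t F_t^\top = \sigma_w^2 \sum_{j=0}^{t-1} A^j (A^j)^\top \succeq \sigma_w^2 I_n$, since the $j=0$ summand is exactly $I_n$; hence the block is bounded below by $\sigma_w^2 I_n \succ 0$. Finally, distinct rollouts are generated from independent realizations of the inputs, noises, and initial states, so the vectors $\{z_{l,t}\}_{l=1}^N$ are independent and identically distributed for each fixed $t$.

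The main obstacle here is bookkeeping rather than any deep difficulty: the one place that genuinely requires care is the positive-definiteness argument, since $A$ may be singular (so that $A^t$ or $G_t$ can be rank-deficient), and therefore $\Sigma_t \succ 0$ cannot rest on the $G_t$ or $A^t$ contributions but must instead be anchored on the identity block embedded in $F_t$.
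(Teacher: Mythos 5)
Your proposal is correct and rests on the same key observation as the paper's proof: since $F_t$ contains an identity block, $\sigma_w^2 F_t F_t^\top \succeq \sigma_w^2 I_n \succ 0$ anchors the positive definiteness of the top-left block even when $G_t G_t^\top$ or $A^t (A^t)^\top$ is rank deficient, so the block-diagonal $\Sigma_t$ satisfies $\lambda_{\min}(\Sigma_t) > 0$. You additionally spell out the covariance computation and the vanishing of the cross term $\E\bigl[x_{l,t} u_{l,t}^\top\bigr]$, which the paper's proof takes as given, but this is bookkeeping rather than a different route.
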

 For each rollout $l$, the data is concatenated as
${X}_{l}=\left[\begin{array}{lll}{x}_{l,T} & \cdots & {x}_{l,1}\end{array}\right] \in \mathbb{R}^{n \times T}, \quad {Z}_l=\left[\begin{array}{lll}{z}_{l,T-1}&\cdots& {z}_{l,0}\end{array}\right] \in \mathbb{R}^{(n+p) \times T},$ and ${W}_l=\left[\begin{array}{llll}{w}_{l,T-1} & \cdots & {w}_{l,0}\end{array}\right] \in \mathbb{R}^{n \times T} .$ We further stack the data  from all the rollouts to obtain the batch matrices ${X}=\left[\begin{array}{lll}{X}_1 & \ldots & {X}_{N}\end{array}\right] \in \mathbb{R}^{n \times N T}, \quad {Z}=\left[\begin{array}{lll}{Z}_{1} & \cdots & {Z}_{N}\end{array}\right] \in \mathbb{R}^{(n+p) \times N T},$ and ${W}=\left[\begin{array}{lll}{W}_{1} & \cdots & {W}_{N}\end{array}\right]\in\mathbb{R}^{n \times N T}.$ Pulling it all together, the state, input noise, and model parameters are related via the system of linear equations,$$X = \Theta Z + W.$$
From this system description, an estimate of system dynamics can be obtained by solving the  unconstrained least-square problem:
\begin{equation}\label{eq:ct_ls}
\hat{\Theta}\triangleq \left[\begin{array}{ll} \hat{A}&\hat{B}\end{array}\right]={\arg\min}_{{\Theta} \in \mathbb{R}^{n \times(n+p)}}\|{X}-{\Theta} {Z}\|_{F}^{2}.
\end{equation}

\vspace*{-1em}
\begin{proposition}\footnote{It is important to remark that \citep{dean2020sample} consider the last time-step of $\Sigma_t$ with $x_0=0$ for the sample complexity characterization, whereas, in our work, we include the information of $\Sigma_t$ from all time steps with $x^{(i)}_{0} \sim \mathcal{N}\left(0, \sigma_{i,x}^{2} I_{n}\right)$ to characterize the non-asymptotic convergence rate of our federated system identification approach.} 
\citep{dean2020sample} Assume we collect data from the linear, time-invariant system initialized at $x_{0}=0$, using inputs $u_{t} \stackrel{\text{i.i.d.}}{\sim} \mathcal{N}\left(0, \sigma_{u}^{2} I_{p}\right)$ for $t=1, \cdots, T$. Suppose  $w_{t} \stackrel{\text{i.i.d.}}{\sim} \mathcal{N}\left(0, \sigma_{w}^{2} I_{n}\right)$ and
$N \geq 8(n+p)+16 \log (4 / \delta)$. Thus, with probability at least $1-\delta$, the least squares estimator using only the final sample of each trajectory satisfies 
$$
\max \left\{ \|\hat{A}-A\|,\|\hat{B}-B\| \right\} \leq \frac{16 \sigma_{w}}{\sqrt{\lambda_{\min }\left(\Sigma_{T-1}\right)}} \sqrt{\frac{(n+2 p) \log (36 / \delta)}{N}}.
$$
\end{proposition}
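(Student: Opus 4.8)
The plan is to start from the closed-form solution of the least-square problem~\eqref{eq:ct_ls}. Assuming $ZZ^\top$ is invertible (which I will justify a posteriori through the Gram-matrix lower bound), the minimizer is $\hat\Theta = XZ^\top(ZZ^\top)^{-1}$, and substituting $X = \Theta Z + W$ yields the clean error identity
$$
\hat\Theta - \Theta = WZ^\top(ZZ^\top)^{-1}.
$$
Since $\max\{\|\hat A - A\|,\|\hat B - B\|\} \le \|\hat\Theta - \Theta\|$, it suffices to bound the right-hand side in spectral norm. The crucial structural observation is that, because we use only the \emph{final} sample of each rollout, the regressor $z_{l,T-1}$ depends only on $u_{l,0:T-1}$ and $w_{l,0:T-2}$, and is therefore independent of the regression noise $w_{l,T-1}$. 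Collecting the $N$ rollouts, the columns of $Z$ are i.i.d. $\mathcal N(0,\Sigma_{T-1})$ (Lemma~\ref{lemma:covariance_matrix}) and independent of $W$. This independence is what makes the argument go through and is the reason the statement restricts to the last time-step.

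The next step is to split the error using the whitening decomposition
$$
\|\hat\Theta - \Theta\| \;\le\; \big\|WZ^\top(ZZ^\top)^{-1/2}\big\|\cdot\big\|(ZZ^\top)^{-1/2}\big\| \;=\; \frac{\big\|WZ^\top(ZZ^\top)^{-1/2}\big\|}{\sqrt{\lambda_{\min}(ZZ^\top)}},
$$
which reduces the problem to two independent concentration estimates: a lower bound on $\lambda_{\min}(ZZ^\top)$ and an upper bound on the whitened noise term. For the denominator, I would view $ZZ^\top = \sum_{l=1}^N z_{l,T-1}z_{l,T-1}^\top$ as a sum of $N$ independent Gaussian outer products and apply a standard sample-covariance concentration result (matrix Chernoff / a net argument over the unit sphere). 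This gives $\lambda_{\min}(ZZ^\top) \ge \tfrac{1}{2}N\,\lambda_{\min}(\Sigma_{T-1})$ with probability at least $1-\delta/2$, precisely once $N \ge 8(n+p)+16\log(4/\delta)$, which is where the stated sample-complexity threshold enters.

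For the numerator, I would condition on $Z$ and exploit independence: the matrix $Z^\top(ZZ^\top)^{-1/2} \in \mathbb{R}^{N\times(n+p)}$ has orthonormal columns, so $WZ^\top(ZZ^\top)^{-1/2}$ is, conditionally, an $n\times(n+p)$ matrix with i.i.d. $\mathcal N(0,\sigma_w^2)$ entries; its spectral norm concentrates as $\lesssim \sigma_w\big(\sqrt{n+p}+\sqrt{\log(1/\delta)}\big)$ by the Gaussian-matrix deviation inequality, and I would bound the dimensional and confidence terms crudely into the single factor $\sqrt{(n+2p)\log(36/\delta)}$. Combining the two bounds via a union bound over the two failure events and tracking the absolute constants produces the factor $16$ and the final rate $\tfrac{16\sigma_w}{\sqrt{\lambda_{\min}(\Sigma_{T-1})}}\sqrt{(n+2p)\log(36/\delta)/N}$. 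I expect the main obstacle to be the lower bound on $\lambda_{\min}(ZZ^\top)$: controlling the smallest eigenvalue of the empirical Gram matrix sharply enough to obtain the explicit threshold $N \ge 8(n+p)+16\log(4/\delta)$ is the delicate part, whereas the noise term is routine once independence is in hand. Careful bookkeeping of constants so that the $\delta/2$-splits recombine into the clean stated bound is the remaining, purely technical, effort.
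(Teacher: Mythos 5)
Your proposal is correct in all essentials, but it is worth noting what it is being compared against: the paper never proves this proposition at all --- it is imported verbatim from \citep{dean2020sample} --- and the closest thing to an in-paper proof is the appendix machinery (Lemma \ref{lemma:vershynin2010}, Lemma \ref{lemma:Dean_fi_gi}, Propositions \ref{prop:DeltaZ_bound} and \ref{prop:WZ_bound}) used for the federated generalization in Theorem \ref{thm:fl_results}. Relative to that machinery, your route is genuinely different in one place: the noise term. The paper bounds $\|WZ^\top\|$ and $\|(ZZ^\top)^{-1}\|$ \emph{separately}, invoking Lemma \ref{lemma:Dean_fi_gi} (a covering-argument bound on $\|\sum_l f_l g_l^\top\|$) for the first factor; specialized to a single client and the final time step, that route yields a numerator factor $\|\Sigma_{T-1}\|^{1/2}$ against a denominator $\lambda_{\min}(\Sigma_{T-1})$, i.e.\ an extra condition-number factor $\sqrt{\|\Sigma_{T-1}\|/\lambda_{\min}(\Sigma_{T-1})}$ compared with the proposition's clean $1/\sqrt{\lambda_{\min}(\Sigma_{T-1})}$. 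Your empirical whitening --- splitting off $(ZZ^\top)^{-1/2}$ and observing that $WZ^\top(ZZ^\top)^{-1/2}$ is, conditionally on $Z$, an $n\times(n+p)$ matrix with i.i.d.\ $\mathcal{N}(0,\sigma_w^2)$ entries --- is exactly what removes that factor and recovers the stated form, with sharper dimensional dependence ($\sqrt{n}+\sqrt{n+p}$ rather than $\sqrt{(2n+p)\log(9/\delta)}$) as a bonus. What the paper's non-whitened decomposition buys in exchange is that it extends mechanically to the federated problem, where the additional heterogeneity term $\Delta^{(i)}Z^\top(ZZ^\top)^{-1}$ and the client-dependent variances $\sigma_{i,w}$ make the separate bounds of Propositions \ref{prop:DeltaZ_bound} and \ref{prop:WZ_bound} the convenient currency. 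Two small corrections to your bookkeeping: under the stated threshold $N\ge 8(n+p)+16\log(4/\delta)$, the Vershynin-type argument delivers $\lambda_{\min}(ZZ^\top)\ge \tfrac{N}{4}\lambda_{\min}(\Sigma_{T-1})$, not $\tfrac{N}{2}\lambda_{\min}(\Sigma_{T-1})$ as you assert (the threshold makes $\sqrt{n+p}+\sqrt{2\log(4/\delta)}\le \tfrac12\sqrt{N}$, so the square root of the minimum eigenvalue is only guaranteed to exceed $\tfrac12\sqrt{N}$); and your conditional noise bound should carry the row dimension as well, i.e.\ $\sigma_w\bigl(\sqrt{n}+\sqrt{n+p}+\sqrt{2\log(2/\delta)}\bigr)$. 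Both slips cost only absolute constants, which are absorbed by the generous slack in the factor $16$ and in $(n+2p)\log(36/\delta)$, so your argument goes through as planned.
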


The above proposition indicates that the estimation error for the least-squares method scales as $\mathcal{O}(N^{-\frac{1}{2}})$. However, the magnitude of $N$ is typically small due to the difficulty and cost of collecting samples, especially for large-scale and safe-critical systems, leading us to a large estimation error. In order to address this issue of insufficient data volume for system identification, we exploit the federated learning (FL) framework, where $N\triangleq \sum_{i=1}^{M} {N_i}$ rollouts are generated from similar, but \emph{not idential} LTI systems, that each generates $N_i$ rollouts. Moreover, the number of clients $M$ is usually large, which enables each client to borrow other clients' data indirectly in order to achieve a better sample efficiency, e.g. $\mathcal{O}(\frac{1}{\sqrt{\sum_{i=1}^M N_i}})$, by participating in FL and benefiting from the participation of other clients.

\vspace{-1mm}
\section{Federated System Identification}
\label{sec:federated_setting}

We now pose the federated system identification problem, introduce \texttt{FedSysID} - a federated learning algorithm for system identification, and analyze its performance. Let there be $M$ clients collaborating to solve the system identification problem under the coordination of a central server.  We consider a client-server model of computation, i.e., there is no direct communication between clients, and communication between the central server and client only happens periodically. Each client observes and collects data from a separate dynamical system that is assumed to follow the LTI system dynamics,
\vspace{-1mm}
\begin{align}\label{eq:fed_sys}
    x_{t+1}^{(i)} = A^{(i)} x_{t}^{(i)} + B^{(i)} u_{t}^{(i)} + w_{t}^{(i)}
\end{align}
for $i \in [M]$. To reduce notation, we assume that each system is observed for $T$ time steps. All the results presented carry over to the case where each client collects measurements over differing time horizons. Client $i$ performs $N_i$ rollouts of length $T$ where they excite their systems with an i.i.d. Gaussian input $u_{t}^{(i)} \sim \mathcal{N}\left(0, \sigma_{i,u}^{2} I_{p}\right)$, process noise $w_{t}^{(i)} \sim \mathcal{N}\left(0, \sigma_{i,w}^{2} I_{n}\right)$, and an initial state $x^{(i)}_{0} \sim \mathcal{N}\left(0, \sigma_{i,x}^{2} I_{n}\right)$. The resulting data set for each client is defined as $\left\{x_{l,t}^{(i)}, u_{l,t}^{(i)}\right\}_{t=0}^{T-1}$, for $l=  1,\hdots  N_i$.

The  objective of the federated system identification problem is to find a common estimation of the system matrices $\bar{\Theta}=\left[\begin{array}{ll} \bar{A}& \bar{B}\end{array}\right]$ which provides a small estimation error with respect to each client $i$'s true system $\Theta^{\fl}=\left[\begin{array}{ll} A^{(i)} & B^{(i)}\end{array}\right]$. However, each client $i$ maintains its own data (without sharing it) and can estimate its unknown system dynamics $(A^{(i)},B^{(i)})$ via the least-square procedure~\eqref{eq:ct_ls} described previously. We aim  to answer the question: \textit{Can clients in the FL setting achieve a lower estimation error and a better
sample complexity cost compared to the centralized setting?} To answer this question, we first need to make the following heterogeneity assumption regarding each client's system dynamics.
\begin{assume} (Bounded System Heterogeneity)
There exists a constant $\epsilon$ such that
$$\underset{i,j \in [M]}{\max} \lVert  A^{(i)} -A^{(j)}\rVert \leq \epsilon,\text{ and } 
\underset{i,j \in [M]}{\max}\lVert B^{(i)} -B^{(j)} \rVert\leq \epsilon, \quad \text{holds for all } i,j\in [M].$$
\label{assumption:bnd_sys_heterogeneity}
\end{assume}
\vspace*{-1em}
At the heart of the federated system identification problem is determining when collaboration helps the identification. Intuitively, when clients observe very similar systems, it makes sense to collaborate. For wildly divergent systems, the collaboration will be detrimental. One of our objectives is to provide a more subtle characterization of the effect of system heterogeneity on identification performance. We use $\epsilon$ to characterize this heterogeneity, we will see that it explicitly appears in \texttt{FedSysID}'s performance analysis.

Intuitively, each client can cooperate to take advantage of other clients' data successfully through FL if the system heterogeneity parameter $\epsilon$ is small. Concretely, the objective of the federated system identification problem can be formulated as
\begin{equation}\label{eq:FL}
\bar{\Theta}=\left[\bar{A}, \bar{B}\right]=\frac{1}{M}\sum_{i=1}^M {\arg\min}_{{\Theta} \in \mathbb{R}^{n \times(n+p)}}\|{X}^{\fl} -{\Theta} {Z}^{\fl}\|_{F}^{2}
\end{equation}
where, ${X}^{\fl}=\left[\begin{array}{lll}{X}^{\fl}_1 & \ldots & {X}^{\fl}_{N_{i}}\end{array}\right] \in \mathbb{R}^{n \times N_{i} T},\text{ and 
 }{X}^{\fl}_{l}=\left[\begin{array}{lll}{x}_{l,T}^{\fl} & \cdots & {x}_{l,1}^{\fl}\end{array}\right] \in \mathbb{R}^{n \times T},$ and similarly for $Z^{(i)}$ and $W^{(i)}.$ We characterize the distance between the estimated system dynamics $\bar{\Theta}$, given by Eq~\eqref{eq:FL}, and each client's true system $\Theta^{(i)}$ in the following theorem.
 \begin{theorem}\label{thm:fl_results}\allowdisplaybreaks
 For any fixed $\delta>0$, let $N_{i} \geq$ $\max \left\{8(n+p)+16 \log \frac{2 MT}{\delta}, (4 n+2 p) \log \frac{MT}{\delta}\right\}$. Then, with probability at least $1-3 \delta$, the estimated dynamics $\bar{\Theta}$ in~\eqref{eq:FL} satisfies
{\begin{equation}\begin{aligned}
\label{eq:mainresult}
& \max \left\{\left\|\bar{A}-A^{\fl}\right\|,\left\|\bar{B}-B^{\fl}\right\|\right\}\nonumber \\
& \le \frac{1}{\sqrt{\sum_{i=1}^M N_i}}\times \underbrace{ \frac{C_{0}\sqrt{\sum_{i=1}^M \sigma_{i,w}^2 \left(\sum_{t=0}^{T-1}\left\|({\Sigma}_{t}^{\fl})^{\frac{1}{2}}\right\|\right)^2} }{\min_{i\in [M]}\lambda_{\min }\left(\sum_{t=0}^{T-1}  {\Sigma}^{\fl}_t\right)}}_{C_1: \text {error constant}} 
+\ \epsilon \times \underbrace{ \frac{9 \sum_{j\neq i}\sqrt{\left(\sum_{t=0}^{T-1} \left\|{\Sigma}_{t}^{\fl}\right\|\right)^2}}{\min_{i\in [M]}\lambda_{\min }\left(\sum_{t=0}^{T-1}{\Sigma}^{\fl}_t\right)}}_{C_2: \text{heterogeneity constant}},
\end{aligned}\end{equation}}
for all $i\in [M]$, where $C_{0}=16 \sqrt{(2 n+p) \log \frac{9 M T}{\delta}}$ and $\Sigma_t^{(i)}$ is the covariance of  $z_{l,t}^{\fl} \triangleq\left[\begin{array}{ll}
x_{l,t}^{\fl \top} &
u_{l,t}^{\fl \top}
\end{array}\right]^\top \in \mathbb{R}^{n+p}$. 
\end{theorem}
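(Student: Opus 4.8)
The plan is to exploit that $\bar{\Theta}$ is the uniform average of $M$ \emph{independent} single-client least-squares estimators and to reduce the error to the two advertised pieces. First I would invoke the normal equations for each client, writing its data as a mismatched realization of client $i$'s system: since $X^{(j)}=\Theta^{\fl}Z^{(j)}+(\Theta^{(j)}-\Theta^{\fl})Z^{(j)}+W^{(j)}$, on the high-probability event that each Gram matrix is invertible the minimizer of~\eqref{eq:ct_ls} satisfies $\hat\Theta^{(j)}-\Theta^{\fl}=\big[(\Theta^{(j)}-\Theta^{\fl})Z^{(j)}+W^{(j)}\big](Z^{(j)})^\top\big(Z^{(j)}(Z^{(j)})^\top\big)^{-1}$. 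Averaging over $j$ and using that the $j=i$ mismatch block vanishes then gives the exact decomposition
\begin{equation*}
\begin{aligned}
\bar{\Theta}-\Theta^{\fl}
&=\underbrace{\frac{1}{M}\sum_{j=1}^{M} W^{(j)}(Z^{(j)})^\top\big(Z^{(j)}(Z^{(j)})^\top\big)^{-1}}_{\text{statistical error}}\\
&\quad+\underbrace{\frac{1}{M}\sum_{j\neq i}\big(\Theta^{(j)}-\Theta^{\fl}\big)Z^{(j)}(Z^{(j)})^\top\big(Z^{(j)}(Z^{(j)})^\top\big)^{-1}}_{\text{heterogeneity bias}},
\end{aligned}
\end{equation*}
whose two spectral norms will produce the $C_1/\sqrt{\sum_i N_i}$ and $\epsilon\,C_2$ contributions, respectively.

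The statistical term is where the work is and where I expect the main obstacle. I would first control each empirical Gram matrix from below, $Z^{(j)}(Z^{(j)})^\top\succeq c\,N_j\sum_{t=0}^{T-1}\Sigma_t^{(j)}$, by viewing $\sum_t z_{l,t}^{(j)}(z_{l,t}^{(j)})^\top$ as a single i.i.d.\ PSD summand per rollout (mean $\sum_t\Sigma_t^{(j)}\succ 0$ by Lemma~\ref{lemma:covariance_matrix}) and applying a matrix-Chernoff minimum-eigenvalue bound; this is exactly what forces $N_j\gtrsim(4n+2p)\log\frac{MT}{\delta}$ and supplies the denominator $\min_{i}\lambda_{\min}(\sum_t\Sigma_t^{\fl})$. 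The numerator requires bounding the aggregated noise--regressor cross term. The key structural fact is that within each rollout $w_{l,t}^{(j)}$ is independent of the regressor $z_{l,t}^{(j)}$ and of the entire past, so this is a sum of martingale differences, while across clients the summands are fully independent. A self-normalized (whitening) martingale bound, combined with an $\varepsilon$-net over the $n$- and $(n+p)$-dimensional spheres and a union bound over the $MT$ time indices (consuming the threshold $N_j\gtrsim 8(n+p)+16\log\frac{2MT}{\delta}$), then yields a tail of the advertised form, with the dimension/confidence factor $C_0=16\sqrt{(2n+p)\log\frac{9MT}{\delta}}$ and the per-client variance proxies $\sigma_{j,w}^2\big(\sum_t\|(\Sigma_t^{(j)})^{1/2}\|\big)^2$ adding under a square root precisely because the client contributions are independent and mean zero.

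For the heterogeneity bias I would bound each off-diagonal summand by submultiplicativity, $\|(\Theta^{(j)}-\Theta^{\fl})Z^{(j)}(Z^{(j)})^\top\big(Z^{(j)}(Z^{(j)})^\top\big)^{-1}\|\le\|\Theta^{(j)}-\Theta^{\fl}\|\,\|Z^{(j)}(Z^{(j)})^\top\|\,\|\big(Z^{(j)}(Z^{(j)})^\top\big)^{-1}\|$, rather than cancelling the Gram factor exactly. Using Assumption~\ref{assumption:bnd_sys_heterogeneity} block-wise ($\|A^{(j)}-A^{\fl}\|\le\epsilon$, $\|B^{(j)}-B^{\fl}\|\le\epsilon$), together with the matching upper bound $\|Z^{(j)}(Z^{(j)})^\top\|\lesssim N_j\sum_t\|\Sigma_t^{(j)}\|$ and the lower bound from the previous step, the $N_j$ cancel and the covariance ratio $\sum_t\|\Sigma_t^{(j)}\|/\lambda_{\min}(\sum_t\Sigma_t^{(j)})$ survives; summing the $M-1$ blocks and taking the worst client in the denominator yields exactly the stated $C_2$.

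Finally I would assemble the probability budget: one $\delta$ for the simultaneous Gram-matrix lower bounds, one $\delta$ for the aggregated cross-term tail, and one $\delta$ for the accompanying upper bounds on $\|Z^{(j)}(Z^{(j)})^\top\|$, so that a union bound delivers the claim with probability at least $1-3\delta$ under the stated sample thresholds. The hard part will be the cross term: reconciling the temporal correlation inside a rollout (handled by the martingale structure) with the heterogeneity of the covariances and noise levels across clients, so that the independent client contributions combine into the single factor $\sqrt{\sum_i\sigma_{i,w}^2(\sum_t\|(\Sigma_t^{\fl})^{1/2}\|)^2}/\sqrt{\sum_i N_i}$ rather than a lossy sum of per-client errors.
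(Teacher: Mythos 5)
Your overall skeleton (split the error into a noise term and a heterogeneity term, lower-bound the Gram matrices to get the denominator, spend a $3\delta$ union-bound budget) matches the paper, and your heterogeneity analysis is fine; in fact, your exact per-client decomposition makes $(\Theta^{(j)}-\Theta^{(i)})Z^{(j)}Z^{(j)\top}\bigl(Z^{(j)}Z^{(j)\top}\bigr)^{-1}$ collapse to $\Theta^{(j)}-\Theta^{(i)}$, so the bias is at most $\sqrt{2}\,\epsilon$ outright, which is even tighter than the stated $\epsilon\, C_2$ that your submultiplicativity bound recovers. The genuine gap is in the statistical term, and it is created precisely by your choice of decomposition. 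You average the $M$ per-client least-squares errors, each weighted by its \emph{own} Gram inverse $\bigl(Z^{(j)}Z^{(j)\top}\bigr)^{-1}$. Per-client bounds give each summand norm $\mathcal{O}\bigl(N_j^{-1/2}\bigr)$, so the triangle inequality yields only $\frac{1}{M}\sum_j N_j^{-1/2}$, which for equal $N_j$ is $\mathcal{O}\bigl(N_i^{-1/2}\bigr)$ — no $\sqrt{M}$ gain. To reach $1/\sqrt{\sum_i N_i}$ you must prove concentration of the average of the $M$ independent, mean-zero random matrices $W^{(j)}Z^{(j)\top}\bigl(Z^{(j)}Z^{(j)\top}\bigr)^{-1}$, e.g.\ conditional sub-Gaussianity of each Gram-inverse-weighted term followed by a matrix Hoeffding/Bernstein bound across clients. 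Your sketch asserts that the variance proxies ``add under a square root because the contributions are independent and mean zero,'' but the tools you name — a self-normalized martingale bound, an $\varepsilon$-net, a union bound over the $MT$ time indices — control the within-client cross term $W^{(j)}Z^{(j)\top}$, not the cross-client aggregation of inverse-weighted errors. That step, which delivers the headline rate, is missing; you flag it yourself as ``the hard part,'' but no mechanism for it is given.

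The paper sidesteps this difficulty entirely by analyzing a \emph{pooled} decomposition: it stacks all clients' data as $X=\Theta^{(i)}Z+W+\Delta^{(i)}$ and bounds $\bar\Theta-\Theta^{(i)}=\bigl(W+\Delta^{(i)}\bigr)Z^\top\bigl(ZZ^\top\bigr)^{-1}$ with a single pooled Gram matrix. There $\|WZ^\top\|$ is bounded by a plain triangle inequality over clients and time steps — no cross-client cancellation is exploited at all. For each fixed $t$ the noise $w_{l,t}^{(i)}$ is independent of $z_{l,t}^{(i)}$ and the rollouts are i.i.d., so Lemma~\ref{lemma:Dean_fi_gi} applies directly with a union bound over $t$ and $i$; no martingale machinery is needed in the multiple-rollout setting. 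This gives a numerator of order $\sum_i\sigma_{i,w}\sqrt{N_i}$, while $\lambda_{\min}\bigl(ZZ^\top\bigr)$ grows like $\bigl(\sum_i N_i\bigr)\min_i\lambda_{\min}\bigl(\sum_{t=0}^{T-1}\Sigma_t^{(i)}\bigr)$, i.e.\ linearly in the total data; a single Cauchy--Schwarz step then produces the $1/\sqrt{\sum_i N_i}$ rate. (A caveat worth noting: the paper's appendix thereby analyzes the pooled estimator $XZ^\top(ZZ^\top)^{-1}$, which coincides with the average of per-client minimizers in \eqref{eq:FL} only when the clients share the same Gram matrix; your decomposition is the faithful reading of \eqref{eq:FL}, but at the price of the unproven cross-client concentration.) Two smaller technical flags: matrix Chernoff requires almost-surely bounded summands, so it does not apply to the unbounded Gaussian outer products $\sum_t z_{l,t}^{(j)}z_{l,t}^{(j)\top}$ treated as one summand per rollout — the paper instead whitens per time step and uses the Gaussian singular-value bound of Lemma~\ref{lemma:vershynin2010} with a union bound over $t$, which you should adopt; and the concatenated mismatch satisfies only $\bigl\|\bigl[\epsilon^{A}_{j,i}\ \ \epsilon^{B}_{j,i}\bigr]\bigr\|\le\sqrt{2}\,\epsilon$ under Assumption~\ref{assumption:bnd_sys_heterogeneity}, a constant both you and the paper gloss over.
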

\begin{proof}
    All proofs are deferred to the Appendix.
\end{proof}

Examining Theorem~\ref{thm:fl_results}, the estimation errors consist of two terms. The first term matches the estimation error of the centralized setting in the setting where each client has access to all the data from all clients, i.e., access to all $N\triangleq \sum_{i=1}^{M} {N_i}$ rollouts. The error constant $C_1$ in Eq~\eqref{eq:mainresult} comes from the noise that each client's system is subject to. Approximately, this is similar to the average signal-noise ratio of all systems across all time steps. The second term is due to the heterogeneity of the clients. Theorem~\ref{thm:fl_results} confirms our intuition that each client can improve its estimation performance from $\mathcal{O}({N_i}^{-1/2})$ to $\mathcal{O}((\sum_{i=1}^{M} {N_i})^{-1/2})$ by participating in the collaboration, if the system heterogeneity $\epsilon$  is small i.e., when the first term dominates the second term in  Theorem~\ref{thm:fl_results}.

The results of Theorem~\ref{thm:fl_results} characterize the optimal solution of the least squares problem~\eqref{eq:FL}. Once this optimal solution $\bar{\Theta}$ is found, this estimation is a good fit for all the similar clients participating in the collaboration. In order to compute $\bar{\Theta}$, we now propose \texttt{FedSysID}, a federated algorithm (see Algorithm~\ref{algFedID} below) that explicitly describes the client-server interaction. Although not the focus of this paper, the algorithm can also handle the case where only a subset of clients participate in each round.

At the beginning of the algorithm, each client has an initial guess $\bar{\Theta}^0$ about its dynamics and chooses $\alpha$ as the step size. At each round $r$, the subset of active users $S_r$ is uniformly sampled from $[M]$.\footnote{ This random sampling step is necessary when a large number of participants are involved as clients may lose connection to the central server due to connectivity or power issues.  }
All clients in $S_r$ execute the ClientUpdate $K_i$-times using their local trajectories before communicating to the server (line 7). By performing multiple local model updates and communicating with the central server for a limited time, the algorithm resolves the core bottleneck of the limited communicating capabilities of clients. Once the local updates are completely performed, each client sends its local model update $\bar{\Theta}_{r+1}^{\fl}$ to the server (line 8). The server  computes an improved global model $\bar{\Theta}_{r+1}$ by averaging the clients' local model updates and sends $\bar{\Theta}_{r+1}$ back to each client (line 11). Full details are presented in Algorithm~\ref{algFedID}.

\begin{algorithm}
\caption{\texttt{FedSysID}} \label{algFedID}
\begin{algorithmic}[1]
\State \textbf{Initialize} the server with $\bar{\Theta}_{0}$ and step size $\alpha$ ;
\State \textbf{Initialize} each client $i\in[M]$ with $\Theta_{0,0}^{\fl}=\bar{\Theta}_{0}$;
\State \textbf{For} each round $r=0, 1, \ldots, R-1$ \textbf{do}
\State \quad \quad uniformly sample $S_r \subseteq \{1,2, \cdots, M\}$ 
\State \quad \quad $\rhd$ {Client side:}
\State \quad \quad \textbf{For} each client $i \in S_r$ \textbf{in parallel do}
\State \quad \quad \quad  $\Theta_{r+1}^{\fl} = \texttt{ClientUpdate}(i, \bar{\Theta}_r, K_i)$
\State \quad \quad \quad send $\Theta_{r+1}^{\fl} $ back to the server
\State \quad \quad \textbf{end for}
\State \quad \quad $\rhd$ {Server side:}
\State \quad \quad  update $\bar{\Theta}_{r+1}=\frac{1}{M}\sum_{i=1}^M \Theta_{r+1}^{\fl}$ and send $\bar{\Theta}_{r+1}$ to each client
\State \textbf{end for}
\State \textbf{Return} $\bar{\Theta}_{R}$
\end{algorithmic}
\end{algorithm}

We describe two different implementations of \texttt{ClientUpdate} in Algorithm~\ref{algFedID}: i) Federated Averaging algorithm (\texttt{FedAvg})~\citep{mcmahan2017communication}; ii) Linear Convergent Federated algorithm (\texttt{FedLin})~\citep{mitra2021linear}.

\begin{itemize}
    \item \texttt{FedAvg:} For each client in $[M]$,  \texttt{ClientUpdate} iterates execute:
      \begin{equation}\label{eq:fedavg}
 \Theta_{r,k}^{\fl} = \Theta_{r,k-1}^{\fl} +\alpha ({X}^{\fl}-\Theta_{r,k-1}^{\fl} {Z}^{\fl})  {Z}^{\fl,\top}, \, \quad k=1,2,\cdots, K_i
    \end{equation}
    with input $\Theta_{r,0}^{\fl}=\Bar{\Theta}_r$ and  output ${\Theta}_{r+1}^{\fl}=\Theta_{r,K_i}^{\fl}$.  From~\citep{pathak2020fedsplit,mitra2021linear}, it is known that both \texttt{FedAvg} cannot converge
    with a constant step size $\alpha$. Thus, with an appropriately chosen stepsize for \texttt{FedAvg},   \texttt{FedSysID} will converge sub-linearly, i.e., $\E\lVert \bar{\Theta}_R -\Theta^{(i)}\rVert \le \mathcal{O}(\frac{1}{KR})$, where the expectation is taken with respect to the sampling scheme, to the optimal solution of problem \eqref{eq:FL}. See Theorem 3 in \citep{woodworth2020minibatch} for details.
    \item \texttt{FedLin:}  Consider the following updating rule of \texttt{ClientUpdate} for all the clients: 
     \begin{equation}\label{eq:fedlin}
     \begin{aligned}
     \Theta_{r,k}^{\fl} &= \Theta_{r,k-1}^{\fl} +\alpha \left[({X}^{\fl}-\Theta_{r,k-1}^{\fl} {Z}^{\fl})  {Z}^{\fl,T} - ({X}^{\fl}-\bar{\Theta}_{r} {Z}^{\fl})  {Z}^{\fl,T} + g_r\right],\\
    g_r &=\frac{1}{M} \sum_{i=1}^M ({X}^{\fl}-\bar{\Theta}_{r} {Z}^{\fl})  {Z}^{\fl,T}, \quad k=1,2,\cdots, K
    \end{aligned}
    \end{equation}
    with input $\Theta_{r,0}^{\fl}=\Bar{\Theta}_r$ and  output ${\Theta}_{r+1}^{\fl}=\Theta_{r,K_i}^{\fl}.$ Note that $g_r$ is the negative gradient of the global function in problem \eqref{eq:FL}. Each client can obtain it by exploiting memory without communication. See~\cite{mitra2021linear} for more details. With proper constant step sizes, \texttt{FedLin} converges linearly to the minimum solution $\bar{\Theta}$, i.e., $\E\lVert \bar{\Theta}_R -\Theta^{(i)}\rVert \le \mathcal{O}(e^{-\beta R})$, where the expectation is taken with respect to the sampling scheme\footnote{Even though, \cite{mitra2021linear} does not mention it, \texttt{FedLin} can also be extended to handle partial participation of clients.} and $\beta$ is the condition number of problem~\eqref{eq:FL}.
\end{itemize}
\allowdisplaybreaks
\begin{corollary}
Frame the hypotheses of Theorem~\ref{thm:fl_results}. For all $R\ge 1,$ the output $\bar{\Theta}_{R}$ given by the \texttt{FedSysID} algorithm satisfies
\allowdisplaybreaks
\begin{equation}\label{eq:corollary_FedAvg}
\allowdisplaybreaks
\max \left\{\E\lVert\bar{A}_{R}-A^{\fl}\rVert,\E\lVert\bar{B}_{R}-B^{\fl}\rVert\right\}  \le \mathcal{O}\left(\frac{1}{KR} + \frac{1}{\sqrt{\sum_{i=1}^M N_i}}\times C_1 +\epsilon \times C_2\right)
\end{equation}
with the \texttt{ClientUpdate} being performed according to the local updating rule of \texttt{FedAvg}. And, 
\allowdisplaybreaks
\begin{equation}\allowdisplaybreaks\label{eq:corollary_FedLin}
\max \left\{\E\lVert\bar{A}_{R}-A^{\fl}\rVert,\E\lVert\bar{B}_{R}-B^{\fl}\rVert\right\}  \le \mathcal{O}\left(e^{-\beta R} + \frac{1}{\sqrt{\sum_{i=1}^M N_i}}\times C_1 +\epsilon \times C_2\right)
\end{equation}
with the \texttt{ClientUpdate} selected from the local updating rule of \texttt{FedLin}. 
\label{corollary:client_update}
\begin{proof}
The proof for this corollary is detailed in the Appendix.
\end{proof}
\end{corollary}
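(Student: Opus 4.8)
The plan is to bound the end-to-end error $\E\lVert\bar{\Theta}_{R}-\Theta^{\fl}\rVert$ by splitting it into an \emph{optimization} error, controlled by whichever FL subroutine is plugged into \texttt{ClientUpdate}, and a \emph{statistical} error, which is exactly what Theorem~\ref{thm:fl_results} already controls. Writing $\bar{\Theta}$ for the target of~\eqref{eq:FL}, the triangle inequality gives, for every client $i$,
\[
\lVert\bar{\Theta}_{R}-\Theta^{\fl}\rVert \le \underbrace{\lVert\bar{\Theta}_{R}-\bar{\Theta}\rVert}_{\text{optimization}} + \underbrace{\lVert\bar{\Theta}-\Theta^{\fl}\rVert}_{\text{statistical}}.
\]
I would take expectations over the round-wise client sampling $\{S_r\}$, which is independent of the data, so that $\bar{\Theta}$ and $\Theta^{\fl}$ are constants under this expectation. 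Finally, since the $A$- and $B$-blocks are submatrices of $\Theta$, one has $\lVert\bar{A}_{R}-A^{\fl}\rVert\le\lVert\bar{\Theta}_{R}-\Theta^{\fl}\rVert$ and likewise for $B$, so taking the max over the two blocks at the very end is free. It therefore suffices to bound the two terms separately.

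First I would dispatch the statistical term. On the probability-$(1-3\delta)$ event of Theorem~\ref{thm:fl_results}, the quantity $\max\{\lVert\bar{A}-A^{\fl}\rVert,\lVert\bar{B}-B^{\fl}\rVert\}$ is \emph{deterministically} at most $\tfrac{1}{\sqrt{\sum_{i} N_i}}\,C_1+\epsilon\,C_2$; because this does not depend on the sampling scheme, it passes unchanged through the expectation. Everything that follows is carried out on this same high-probability event, so the two error sources share a common conditioning and incur no extra union-bound cost.

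Next I would handle the optimization term by invoking the quoted convergence guarantees, after verifying their hypotheses. The key preliminary is that the global objective $F(\Theta)=\tfrac{1}{M}\sum_{i}\lVert X^{\fl}-\Theta Z^{\fl}\rVert_F^2$ is smooth and strongly convex, which both the \texttt{FedAvg} rate of~\citep{woodworth2020minibatch} and the \texttt{FedLin} rate of~\citep{mitra2021linear} require. The Hessian of client $i$'s local loss is governed by the empirical Gram matrix $Z^{\fl}(Z^{\fl})^{\top}=\sum_{l,t} z_{l,t}^{\fl}(z_{l,t}^{\fl})^{\top}$, whose mean is $N_i\sum_{t=0}^{T-1}\Sigma_t^{\fl}\succ 0$ by Lemma~\ref{lemma:covariance_matrix}; on the event where the sample-size conditions of Theorem~\ref{thm:fl_results} hold, standard concentration makes each $Z^{\fl}(Z^{\fl})^{\top}$ well-conditioned, so the smoothness constant $L=\max_i\lambda_{\max}(Z^{\fl}(Z^{\fl})^{\top})$ and strong-convexity constant $\mu=\min_i\lambda_{\min}(Z^{\fl}(Z^{\fl})^{\top})$ are finite and positive and the condition number $\beta$ in~\eqref{eq:corollary_FedLin} is well defined. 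With these constants in hand, the \texttt{FedAvg} iteration~\eqref{eq:fedavg} with the prescribed decaying step size yields $\E\lVert\bar{\Theta}_{R}-\bar{\Theta}\rVert\le\mathcal{O}(\tfrac{1}{KR})$, while the variance-reduced \texttt{FedLin} iteration~\eqref{eq:fedlin} yields $\E\lVert\bar{\Theta}_{R}-\bar{\Theta}\rVert\le\mathcal{O}(e^{-\beta R})$. Substituting each into the decomposition and taking the max over the $A$/$B$ blocks produces~\eqref{eq:corollary_FedAvg} and~\eqref{eq:corollary_FedLin} respectively.

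The hard part will be ensuring that the fixed point to which the subroutine actually converges coincides with the $\bar{\Theta}$ of~\eqref{eq:FL} that Theorem~\ref{thm:fl_results} analyzes, so that the two error terms compose without a leftover bias. This is delicate because, with heterogeneous Gram matrices $Z^{\fl}(Z^{\fl})^{\top}$, the minimizer of the averaged objective $F$ need not equal the average of the per-client minimizers defining $\bar{\Theta}$; aligning the target of the cited \texttt{FedAvg}/\texttt{FedLin} results with~\eqref{eq:FL} is the step requiring the most care, and for \texttt{FedAvg} it is precisely the client-drift term that forces a vanishing step size and the $\tfrac{1}{KR}$ rate rather than a constant-step-size neighborhood. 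A secondary bookkeeping point is that Theorem~\ref{thm:fl_results} is a high-probability statement while the corollary is stated in expectation; I would resolve this by conditioning on the good data event throughout and taking expectations only over the independent sampling randomness, so that the stated $\mathcal{O}(\cdot)$ bounds are understood to hold on that event.
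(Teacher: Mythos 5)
Your proposal is correct and follows essentially the same route as the paper: the identical triangle-inequality decomposition $\lVert\bar{\Theta}_R-\Theta^{\fl}\rVert \le \lVert\bar{\Theta}_R-\bar{\Theta}\rVert + \lVert\bar{\Theta}-\Theta^{\fl}\rVert$, with the statistical term handled by Theorem~\ref{thm:fl_results} and the optimization term by the cited \texttt{FedAvg}/\texttt{FedLin} convergence rates. Your additional bookkeeping (conditioning on the high-probability event, verifying strong convexity of the global objective, and aligning the algorithms' limit point with the $\bar{\Theta}$ analyzed in Theorem~\ref{thm:fl_results}) is more careful than the paper's own brief argument, but the underlying approach is the same.
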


\section{Experimental Results}
\label{sec:experimental_results}

\noindent Numerical results are now shown to illustrate and assess the efficiency  \texttt{FedSysID}.\footnote{All simulations were developed in MATLAB. Codes can be downloaded from \url{https://github.com/jd-anderson/Federated-ID}} We consider clients characterized by dynamical systems with $n=3$ states and $p=2$ inputs, for which the standard deviations of their initial state, input, and process noise are set to be $\sigma_x=\sigma_u=\sigma_w=1$. Following~\citep{xin2022identifying}, we consider the nominal system ($A_0$, $B_0$), and uniformly distributed variables $(\gamma_1^{(i)},\gamma_2^{(i)}) \backsim U(0,\epsilon)$, such that $A^{(i)} = A_0 + \gamma_1^{(i)}V$ and $B^{(i)} =B_0 + \gamma_2^{(i)}U$, where  $V \in \mathbb{R}^{3\times 3}$ and $U \in \mathbb{R}^{3\times 2}$ represent the modification patterns applied to the nominal system described by, 
\begin{align*}
    A_0=\begin{bmatrix}
        0.6 & 0.5 & 0.4\\
        0 &  0.4 & 0.3\\
        0 &   0 & 0.3
    \end{bmatrix}, \quad  V=\begin{bmatrix}
        0 & 0 & 0\\
        0 &  1 & 0\\
        0 &   0 & 1
    \end{bmatrix},\quad B_0=\begin{bmatrix}
        1  & 0.5\\
        0.5 &  1\\
        0.5 & 0.5
    \end{bmatrix},\quad U=\begin{bmatrix}
        1  & 0\\
        0 &  0\\
        0 & 1
    \end{bmatrix}.
\end{align*}

Note that we use the same amount of data for each client, i.e., $N_i = N_j$ for all $i, j \in [M]$, and we set  $T=5$ for each client. Once $\bar{\Theta}$ is computed according to \eqref{eq:FL} via \texttt{FedSysID}, the estimation error is defined as the average of the distances between an estimation $q$ at the r-th global iteration, $\bar{\Theta}_{r}^q$, and the fixed true system, $\Theta^{(1)}$, for different estimations (i.e., estimations made from different data sets). That is, $e_r = \frac{1}{q}\sum_{i=1}^{q}|| \bar{\Theta}_{r}^i - \Theta^{(1)}||$, where $q=25$.

\begin{figure}[h!]
    \centering
    \subfigure[]{\includegraphics[width=0.3\textwidth]{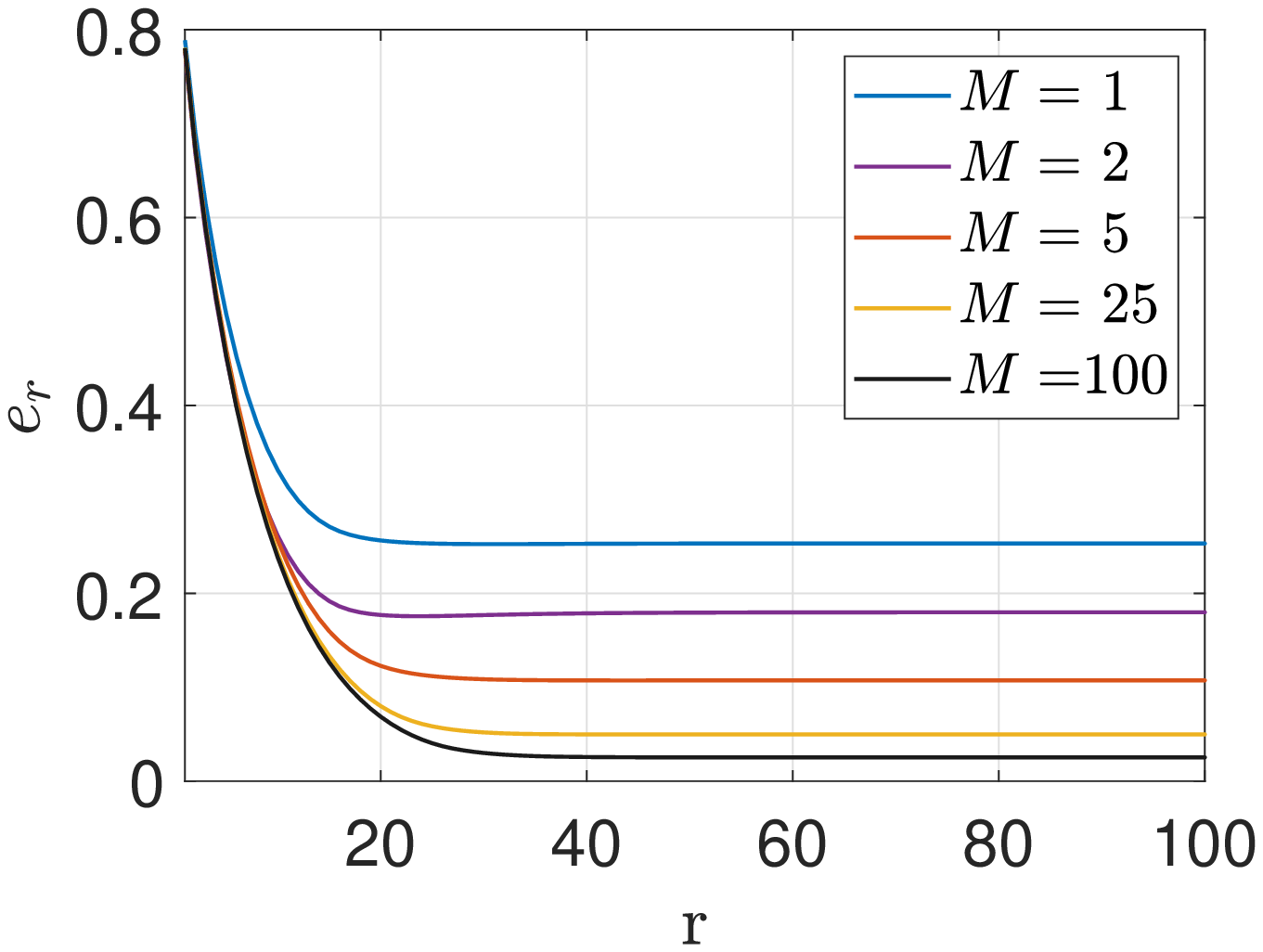}} \label{fig:FedLin_M}
    \subfigure[]{\includegraphics[width=0.3\textwidth]{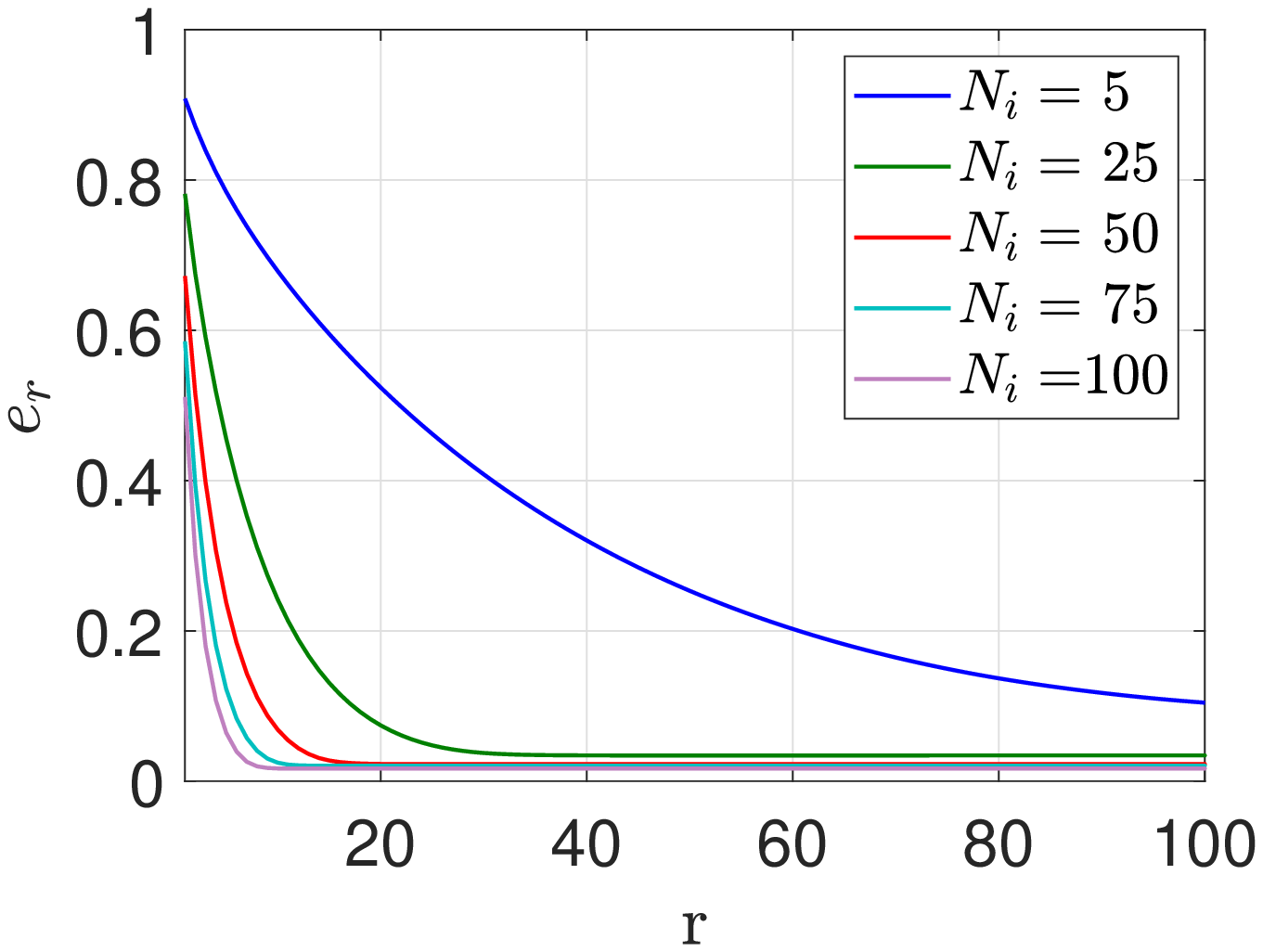}} \label{fig:FedLin_N}
    \subfigure[]{\includegraphics[width=0.32\textwidth]{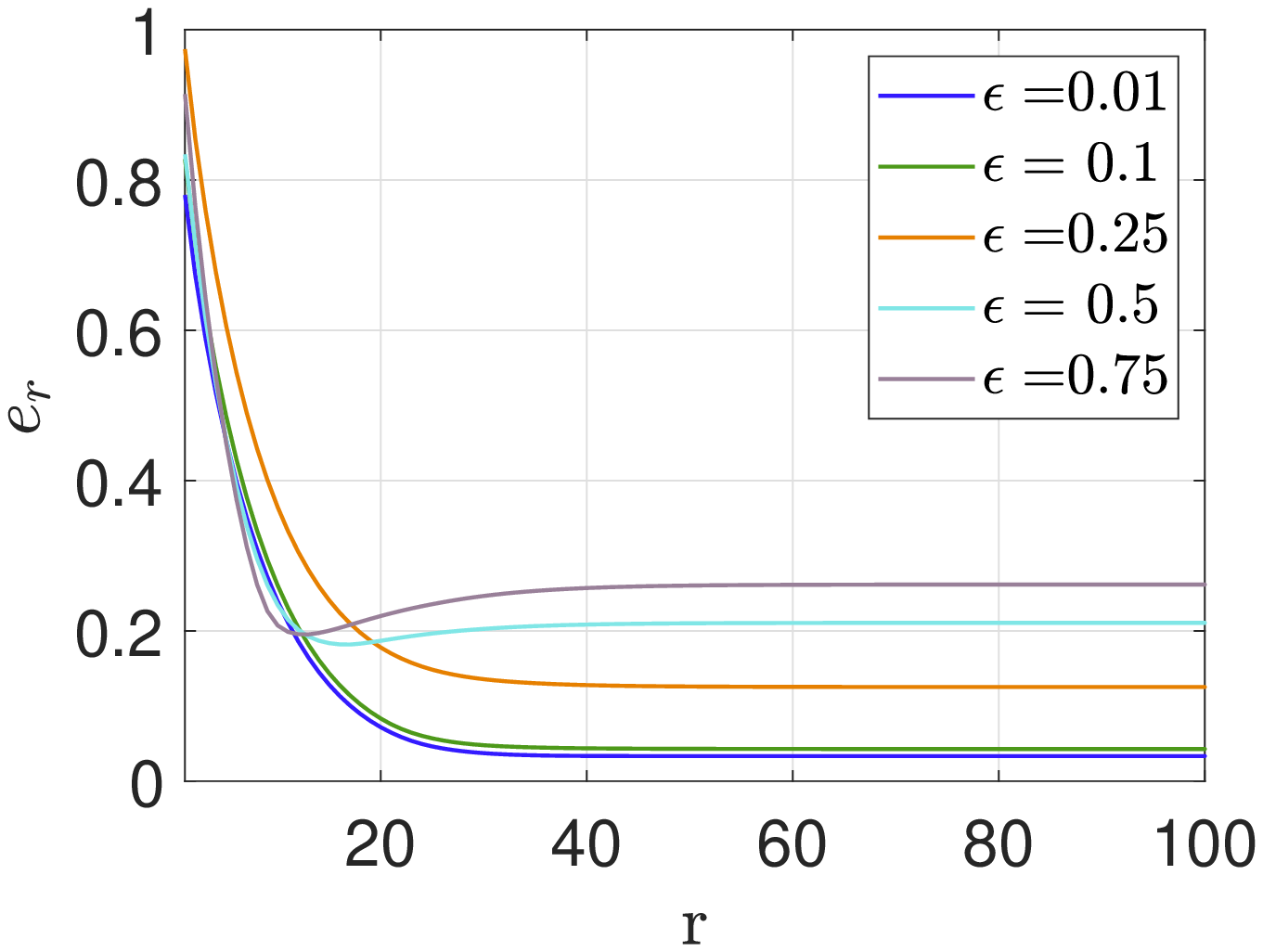}}\label{fig:FedLin_delta}
    \caption{\texttt{FedLin} - Estimation error with respect to the number of global iterations with varying $M$, $N_i$ and $\epsilon$, where each client performs $K_i=10$ local updates with step size $10^{-4}$. (a) $N_i=25$, $\epsilon=0.01$  (b) $M=50$, $\epsilon=0.01$ and (c) $M=50$, $N_i=25$.}
    \label{fig:FedLin}
\end{figure}

Figure \ref{fig:FedLin} illustrates the estimation error with respect to the number of global iterations with varying $M$, $N_i$, and $\epsilon$, when the \texttt{ClientUpdate} is set to the \texttt{FedLin} update. Figure \ref{fig:FedLin} (a) describes the effect of the number of clients $M$ participating in the collaboration on the estimation error, for a fixed number of rollouts $N_i=25$ and a small dissimilarity parameter $\epsilon=0.01$. This figure shows that the estimation error is considerably reduced when the number of clients participating increases. Whilst the estimation error is large for the centralized setting, $M=1$ (i.e., each client learns its dynamics from its own data), it can be reduced by about 10 times when 100 clients collaborate to learn their dynamics by leveraging data from similar systems.  This highlights how the non-asymptotic convergence rate can be improved and scaled by a factor of $\sqrt{M}$ in the low heterogeneity regime.  

Moreover, Figure \ref{fig:FedLin}(b--c) illustrate how the estimation error changes when the number of rollouts $N_i$ and the system's dissimilarity $\epsilon$ increase. Overall, the figures show that the estimation error reduces with a factor of $\sqrt{MN_i}$ and it increases as the heterogeneity of the systems increase. Figure \ref{fig:FedAvg} details a similar analysis in which the same conclusions can be noticed with the \texttt{FedAvg}  \texttt{ClientUpdate} update rule. As expected from Corollary \ref{corollary:client_update},  \texttt{FedSysID} converges faster with \texttt{FedLin} when compared to the \texttt{ClientUpdate} chosen from the local updating rule of \texttt{FedAvg}. 

\begin{figure}[h!]
    \centering
    \subfigure[]{\includegraphics[width=0.3\textwidth]{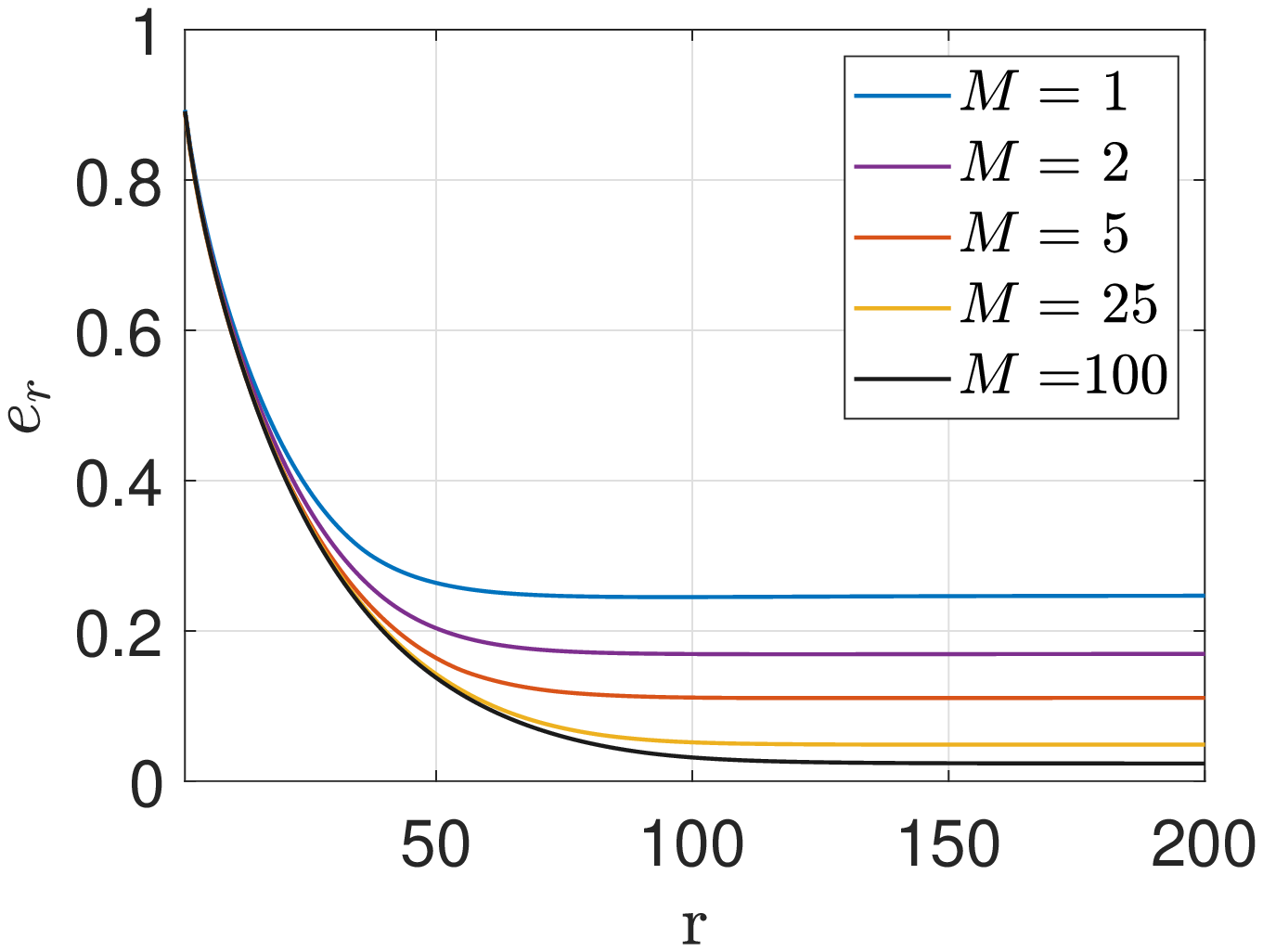}} 
    \subfigure[]{\includegraphics[width=0.3\textwidth]{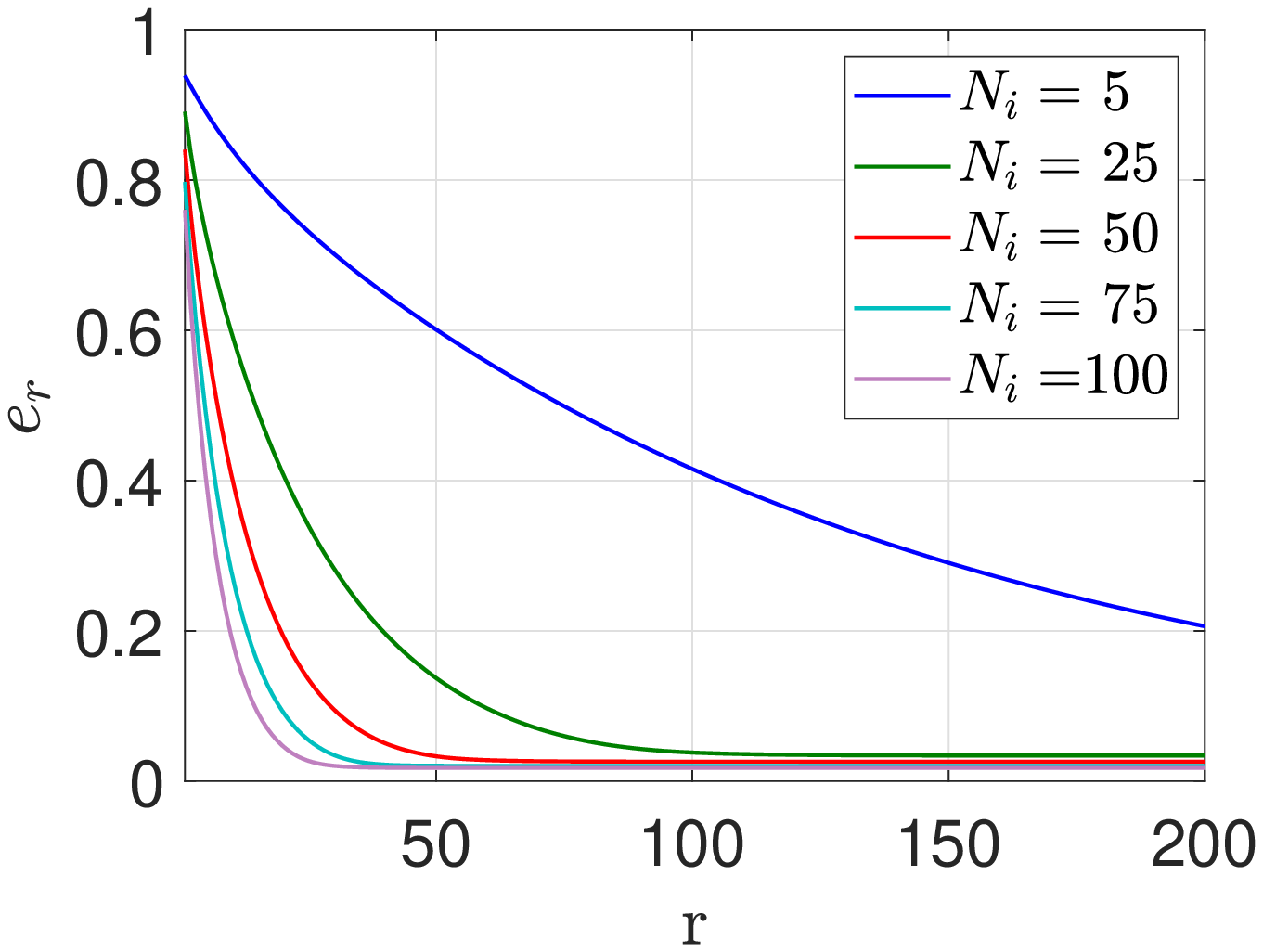}} 
    \subfigure[]{\includegraphics[width=0.32\textwidth]{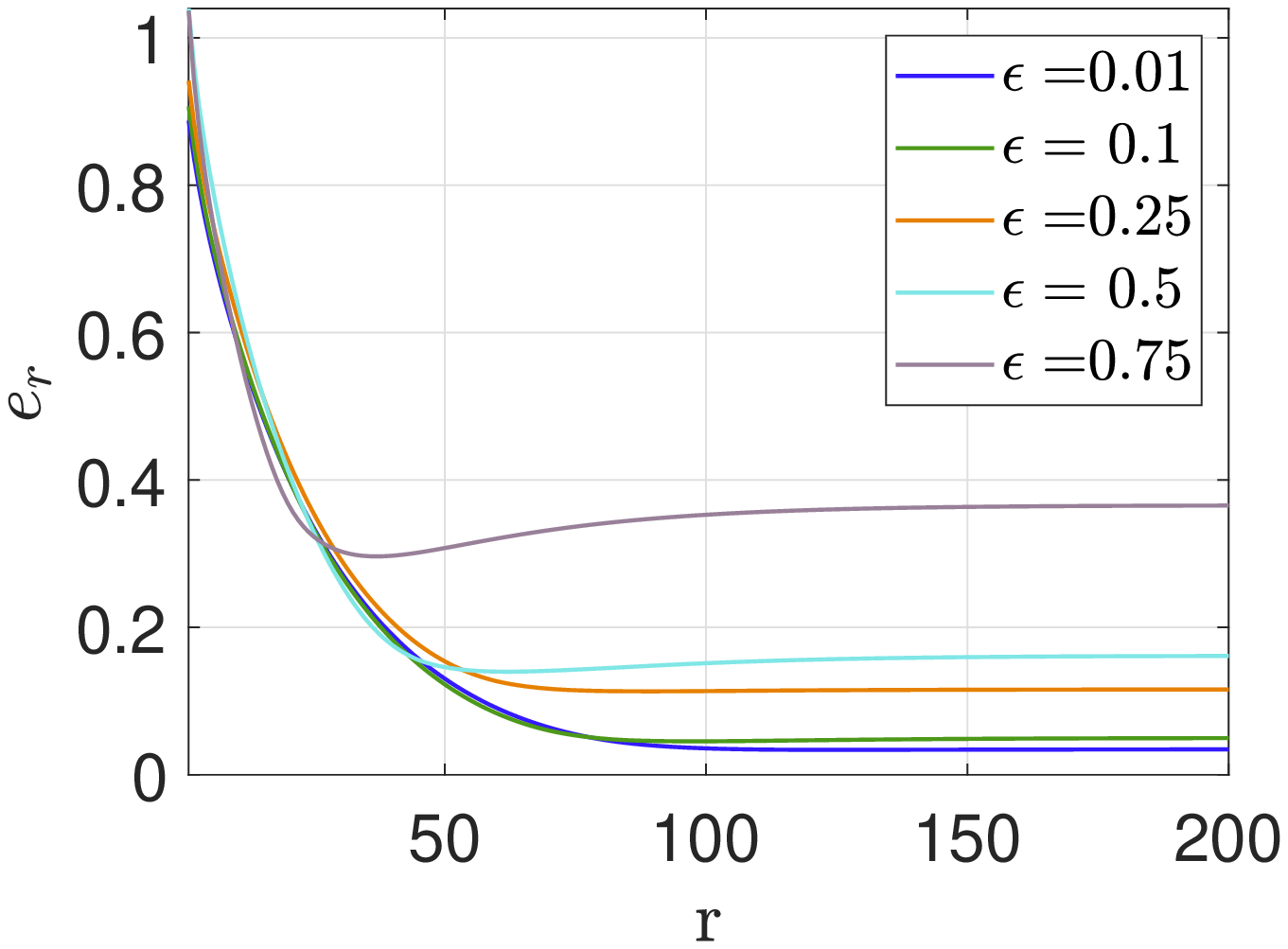}}
    \caption{\texttt{FedAvg} - Estimation error with respect to the number of global iterations with varying $M$, $N_i$ and $\epsilon$, where each client performs $K_i=10$ local updates with linear decreasing step size starting at $10^{-4}$. (a) $N_i=25$, $\epsilon=0.01$  (b) $M=50$, $\epsilon=0.01$ and (c) $M=50$, $N_i=25$.}
    \label{fig:FedAvg}
\end{figure}

\vspace*{-2em}

\section{Conclusions}
\label{sec:conclusions}

In this paper, we considered a federated approach for the system identification problem, where a central server orchestrates the collaboration of $M$ clients, with similar dynamics, to identify a common dynamics that suits well all the clients. We demonstrated that a better non-asymptotic convergence rate scaled by $\sqrt{M}$ is achieved when similar systems learn their dynamics by leveraging data from each other in an FL setting. We also provided a \texttt{FedSysID} algorithm that can be used with any FL optimization method as a subroutine. To illustrate the potential of the proposed approach, numerical experiments were demonstrated considering both \texttt{FedLin} and \texttt{FedAvg} updating rules.  Future work will explore an extension of the proposed approach for developing an online sample version of the \texttt{FedSysID}. We are a currently investigating data-driven and system theoretic heterogeneity metrics.

\acks{The authors would like to thank Aritra Mitra for  valuable and insightful discussions on this work. HW is generously funded by a Wei family fellowship and the Columbia Data Science Institute.  The authors also gratefully acknowledge funding from the DoE under grant DE-SC0022234 and NSF  awards 2144634 \& 2231350}.

\bibliography{reference}

\newpage

\section{Appendix}

In this appendix, we provide the proofs of Lemma \ref{lemma:covariance_matrix}, Theorem \ref{thm:fl_results}, and Corollary \ref{corollary:client_update}. We also include numerical results that confirm the better convergence rate of \texttt{FedLin} when compared to \texttt{FedAvg}. 

\subsection{Proof of Lemma \ref{lemma:covariance_matrix}}

Note that the positiveness of $\Sigma_t$, for all $t\geq 1$, is immediately verified due to the presence of an identity matrix $I_n$ in the definition of $F_t$. Therefore, even for an uncontrollable system, i.e., where $G_t(G_t)^\top$ is rank deficient, the term $F_t(F_t)^\top$ ensures the full rank of $\sigma_{{u}}^{2} {G}_{t} ({G}_{t})^{\top}+\sigma_{{w}}^{2} {F}_{t} ({F}_{t})^{\top}+\sigma_{{i,x}}^{2} {A}^{t} ({A}^{t})^{\top}$. Thus, with $\Sigma_t$ being diagonal with positive elements, we can conclude that $\lambda_{\text{min}}(\Sigma_t)>0$, for all $t\geq 1$.

\subsection{Auxiliary results}
The main proofs will make use of the subsequent results.

We can write system $j$'s dynamics as a function of system $i$ as follows:
\begin{align}\label{eq:auxiliary_system}
    x_{t+1}^{(j)}=\left(A^{(i)}+\epsilon^{A}_{j,i}\right) x_{t}^{(j)}+\left(B^{(i)}+\epsilon^{B}_{j,i}\right) u_{t}^{(j)}+w_{t}^{(j)},
\end{align}
 where $\epsilon^{A}_{j,i}=A^{(j)}-A^{(i)}$, and $\epsilon^{B}_{j,i}=B^{(i)}-B^{(i)}$, for all $i \neq j \in [M]$. By assuming $u_{t}^{(j)}  \stackrel{\text{i.i.d.}}{\sim} \mathcal{N}\left(0, \sigma_{j,u}^{2} I_{p}\right)$, $w_{t}^{(j)}  \stackrel{\text{i.i.d.}}{\sim} \mathcal{N}\left(0, \sigma_{j,w}^{2} I_{n}\right)$, and $x_{0}^{(j)} \stackrel{\text{i.i.d.}}{\sim} \mathcal{N}\left(0, \sigma_{j,x}^{2}\right)$,  \eqref{eq:auxiliary_system} can be rewritten as, 

\begin{align}
\label{eq:augmentedsys}
 X = \Theta^{(i)} Z +W +\Delta^{(i)}
\end{align}
where, 
$$
X=\left[\begin{array}{ll}X^{(i)},\ldots, X^{(M)}\end{array}\right] \in \mathbb{R}^{n \times MN_{i} T},\quad Z=\left[\begin{array}{ll}Z^{(i)},\ldots, Z^{(M)}\end{array}\right] \in \mathbb{R}^{(n+p)\times MN_{i} T},
$$
$$
W=\left[\begin{array}{ll}W^{(i)},\ldots, W^{(M)}\end{array}\right] \in \mathbb{R}^{n \times MN_{i} T}, $$
and 
$$\Delta^{(i)}=\left[\begin{array}{lllllll} \Delta^{(i,1)}& \cdots & \Delta^{(i,i-1)}& \bf{0} & \Delta^{(i, i+1)} & \cdots & \Delta^{(i,M)}\end{array}\right],
$$
with $\Delta^{(i,j)}=\epsilon^{\Theta}_{j,i}Z^{(j)} \in \mathbb{R}^{n \times N_{i} T}$ representing the error caused by using system $i's$ matrices $\Theta^{(i)}=[A^{(i)} \;\ B^{(i)}]$ to describe system $j's$ trajectories, where $\epsilon^{\Theta}_{j,i}=\left[\begin{array}{ll} \epsilon^{A}_{j,i} &\epsilon^B_{j,i}\end{array}\right] \in \mathbb{R}^{n \times(n+p)}$, and $\Delta^{(i,i)}=\bf{0}=\left[\begin{array}{llll}0 &0&\cdots&0\end{array}\right]$ $\in \mathbb{R}^{n \times N_{i} T}$. Using \eqref{eq:FL} to find $\Theta^{(i)}$ in \eqref{eq:augmentedsys}, we can write the  estimation error as
\begin{align}\label{eq:apendix_estimation_error}
\bar{\Theta}-\Theta^{(i)}=WZ^\top\left(Z Z^\top\right)^{-1}+\Delta^{(i)}Z^\top\left(ZZ^\top\right)^{-1}.
\end{align}

\begin{lemma}\label{lemma:vershynin2010}
 Let $v_{i} \stackrel{\text{i.i.d.}}{\sim} \mathcal{N}\left(0, I_{n+p}\right), i=1, \ldots, N$. For any fixed $\delta>0$, with probability at least $1-\delta$, it holds that,
$$
\begin{aligned}
&\sqrt{\lambda_{\min }\left(\sum_{i=1}^{M} v_{i} v_{i}^{*}\right)} \geq \sqrt{M}-\sqrt{n+p}-\sqrt{2 \log \frac{2}{\delta}}, \\
&\sqrt{\lambda_{\max }\left(\sum_{i=1}^{M} v_{i} v_{i}^{*}\right)} \leq \sqrt{M}+\sqrt{n+p}+\sqrt{2 \log \frac{2}{\delta}}.
\end{aligned}
$$
\end{lemma}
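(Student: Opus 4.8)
The plan is to recognize this statement as the classical two-sided bound on the extreme singular values of a Gaussian matrix and to reduce the eigenvalue claim to it. First I would collect the i.i.d. vectors into the matrix $V = [v_1, \dots, v_M] \in \mathbb{R}^{(n+p) \times M}$, whose entries are i.i.d. $\mathcal{N}(0,1)$, so that $\sum_{i=1}^M v_i v_i^* = V V^*$. Since the nonzero eigenvalues of $VV^*$ are exactly the squared singular values of $V$, we have $\sqrt{\lambda_{\min}(VV^*)} = \sigma_{\min}(V)$ and $\sqrt{\lambda_{\max}(VV^*)} = \sigma_{\max}(V)$ (taking $M \geq n+p$ so $VV^*$ is full rank). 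The target then becomes precisely
$$\sqrt{M} - \sqrt{n+p} - t \;\leq\; \sigma_{\min}(V) \;\leq\; \sigma_{\max}(V) \;\leq\; \sqrt{M} + \sqrt{n+p} + t$$
with the choice $t = \sqrt{2\log(2/\delta)}$.

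To establish this bound I would use two ingredients. The first is Gaussian Lipschitz concentration: viewing $V$ as a vector of $(n+p)M$ i.i.d. standard Gaussians, the maps $V \mapsto \sigma_{\max}(V)$ and $V \mapsto \sigma_{\min}(V)$ are each $1$-Lipschitz in the Frobenius norm, because singular values are $1$-Lipschitz in the spectral norm (Weyl perturbation) and $\|\cdot\| \leq \|\cdot\|_F$. The Gaussian concentration inequality then gives $\mathbb{P}(|\sigma(V) - \mathbb{E}\,\sigma(V)| \geq t) \leq 2 e^{-t^2/2}$ for each extreme singular value. The second ingredient is the Gordon comparison inequality, which controls the means via $\mathbb{E}\,\sigma_{\min}(V) \geq \sqrt{M} - \sqrt{n+p}$ and $\mathbb{E}\,\sigma_{\max}(V) \leq \sqrt{M} + \sqrt{n+p}$. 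Combining the two and choosing $t = \sqrt{2\log(2/\delta)}$ so that $2 e^{-t^2/2} = \delta$ delivers the stated high-probability bounds, with the union over the upper and lower deviation events absorbed into the factor $2$ inside $\log(2/\delta)$.

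The only genuinely nontrivial step, and the one I expect to be the main obstacle for a fully self-contained argument, is the expectation bound through Gordon's theorem; the Lipschitz and concentration steps are routine. Since this lemma is a verbatim restatement of the Gaussian singular-value bound in Vershynin's treatment of random matrices, in practice I would simply invoke that result and record the substitution $t = \sqrt{2\log(2/\delta)}$. I would also flag the minor indexing inconsistency in the statement — the vectors are indexed $i = 1, \dots, N$ while the sum and the bounds use $M$ — and standardize the index so that the sum runs over the $M$ i.i.d. vectors before applying the matrix identification above.
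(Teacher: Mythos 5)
Your proposal is correct and takes essentially the same route as the paper: the paper's proof consists of citing Corollary 5.35 of \citep{vershynin2010introduction} with the substitution $t=\sqrt{2\log(2/\delta)}$, which is exactly the reduction you make, and your sketch of that corollary's proof (eigenvalues of $VV^*$ as squared singular values of the Gaussian matrix $V$, Gordon's comparison for the means, $1$-Lipschitz Gaussian concentration for the deviations, union bound absorbed into the factor $2$) is the standard argument behind the cited result. Your observation about the $N$-versus-$M$ indexing inconsistency in the statement is also accurate.
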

\begin{proof}
 See Corollary 5.35 of \cite{vershynin2010introduction} for details.
\end{proof}

\begin{proposition}\label{prop:DeltaZ_bound}
For any fixed $\delta>0$, let $N_{i} \geq 8(n+$ $p)+16 \log \frac{2M T}{\delta}$. It holds, with probability at least $1-\delta$, that,
$$
\begin{gathered}
{Z}^{(i)}{Z}^{(i){\top}} \succeq \frac{N_{i}}{4} \sum_{t=0}^{T-1} {\Sigma}_{t}^{(i)} \\
\left\|\Delta^{(i)}Z^{\top}\right\| \leq \sum_{i=1,\atop i\neq j}^{M} \frac{9 N_{i}}{4} \left\|\epsilon^\Theta_{j,i}\right\|\sum_{t=0}^{T-1} \left\|\Sigma_{t}^{(i)}\right\|.
\end{gathered}
$$

\begin{proof}
First, we can write, 

\begin{align}\label{eq:ZZ}
  {Z}^{(i)} {Z}^{(i)\top}=\sum_{l=1}^{N_{i}} Z^{(i)} Z^{(i)\top}
=\sum_{l=1}^{N_{i}} \sum_{t=0}^{T-1} {z}_{l,t}^{(i)} {z}_{l,t}^{(i)\top}=\sum_{t=0}^{T-1} \sum_{l=1}^{N_{i}} {z}_{l,t}^{(i)} {z}_{l,t}^{(i)\top}  
\end{align}
thus, for any fixed $l,t$, and $i$, we define ${\chi}_{l,t}^{(i)}=(\Sigma_{t}^{(i)})^{-\frac{1}{2}} z_{l,t}^{(i)}$. Note that ${\chi}_{l,t}^{i} \stackrel{\text{i.i.d.}}{\sim} \mathcal{N}\left(0, I_{n+p}\right)$, for all $l \in$ $\left\{1,2, \ldots, N_{i}\right\}$. Hence,  \eqref{eq:ZZ} can be rewritten as,
$$\sum_{t=0}^{T-1} \sum_{l=1}^{N_{i}}(\Sigma_{t}^{(i)})^{\frac{1}{2}} {\chi}_{l,t}^{(i)} {\chi}_{l,t}^{(i)\top}(\Sigma_{t}^{(i)})^{\frac{1}{2}} =\sum_{t=0}^{T-1} (\Sigma_{t}^{(i)})^{\frac{1}{2}}\left(\sum_{l=1}^{N_{i}} {\chi}_{l,t}^{(i)} {\chi}_{l,t}^{(i)\top}\right) (\Sigma_{t}^{(i)})^{\frac{1}{2}}$$ 
where,

\allowdisplaybreaks
\begin{align*}\allowdisplaybreaks \left\|\Delta^{(i)} Z^{\top}\right\| &=\left\|\sum_{j=1,\atop j\neq i}^{M} \Delta^{(i,j)}Z^{(j)\top}\right\|=\left\|\sum_{j=1,\atop j\neq i}^{M} \sum_{l=1}^{N_{j}} \sum_{t=0}^{T-1} \epsilon^\Theta_{j,i} {z}_{l,t}^{(j)} {z}_{l,t}^{(j)\top}\right\| \\ &\leq \sum_{j=1,\atop j\neq i}^{M} \left\|\sum_{t=0}^{T-1} \sum_{l=1}^{N_{j}} \epsilon^\Theta_{j,i} {z}_{l,t}^{(j)} {z}_{l,t}^{(j)\top}\right\| \\ & \leq \sum_{j=1,\atop j\neq i}^{M} \sum_{t=0}^{T-1} \left\|\epsilon^\Theta_{j,i}\right\|\left\|\sum_{l=1}^{N_{j}} z_{l,t}^{(j)} z_{l,t}^{(j) \top}\right\| \\ &=\sum_{j=1,\atop j\neq i}^{M} \sum_{t=0}^{T-1} \left\|\epsilon^\Theta_{j,i}\right\|\left\|({\Sigma_{t}^{(j)}})^{\frac{1}{2}}\left(\sum_{l=1}^{N_{j}} \chi_{l,t}^{(j)} {\chi}_{l,t}^{(j)\top}\right) ({\Sigma_{t}^{(j)}})^{\frac{1}{2}}\right\| \\ & \leq \sum_{j=1,\atop j\neq i}^{M}  \sum_{t=0}^{T-1} \left\|\epsilon^\Theta_{j,i}\right\|\left\|{\Sigma}_{t}^{(j)}\right\|\left\|\sum_{l=1}^{N_{j}} \chi_{l,t}^{j} \chi_{l,t}^{(j)\top}\right\|. 
\end{align*}

Therefore, by fixing $\delta>0$ and $t$, and applying Lemma \ref{lemma:vershynin2010}, we obtain, with probability at least $1-\frac{\delta}{MT}$ the following expressions, 
$$
\sqrt{\lambda_{\min }\left(\sum_{l=1}^{N_{j}} \chi_{l,t}^{(j)} \chi_{l,t}^{(j)\top}\right)} \geq \sqrt{N_{j}}-\sqrt{n+p}-\sqrt{2 \log \frac{2MT}{\delta}},
$$
$$
\sqrt{\lambda_{\max }\left(\sum_{l=1}^{N_{j}} \chi_{l,t}^{(j)} \chi_{l,t}^{(j)\top}\right)} \leq \sqrt{N_{j}}+\sqrt{n+p}+\sqrt{2 \log \frac{2M T}{\delta}}.
$$
Further, we have
$$
\begin{aligned}
 \frac{1}{2} \sqrt{N_{j}} \geq \sqrt{n+p}+\sqrt{2 \log \frac{2 MT}{\delta}},
\end{aligned}
$$
which can be rearranged  to give 
$$
\begin{aligned}
\frac{N_{j}}{4} \geq\left(\sqrt{n+p}+\sqrt{2 \log \frac{2M T}{\delta}}\right)^{2}
\end{aligned},
$$
thus, by using the inequality $2\left(a^{2}+b^{2}\right) \geq(a+b)^{2}$, we can write
$$
2\left(n+p+2 \log \frac{2N T}{\delta}\right) \geq\left(\sqrt{n+p}+\sqrt{2 \log \frac{2 NT}{\delta}}\right)^{2}.
$$

Therefore, as long as $N_{j} \geq 8(n+p)+16 \log \frac{2M T}{\delta}$, the following inequalities hold, with probability at least $1-\frac{\delta}{MT}$,
\begin{equation*}
\begin{aligned}
\sqrt{\lambda_{\min }\left(\sum_{l=1}^{N_{j}} \chi_{l,t}^{(j)} \chi_{l,t}^{(j)\top}\right)} 
&\geq \frac{1}{2} \sqrt{N_{j}}+\frac{1}{2} \sqrt{N_{j}}
-\sqrt{n+p}-\sqrt{2 \log \frac{2 MT}{\delta}}\\
&\geq \frac{1}{2} \sqrt{N_{j}},
\end{aligned}
\end{equation*}
\begin{equation}
\begin{aligned}
\sqrt{\lambda_{\max }\left(\sum_{l=1}^{N_{j}} \chi_{l,t}^{(j)} \chi_{l,t}^{(j)\top}\right)} &\leq \sqrt{N_{j}}+\sqrt{n+p}+\sqrt{2 \log \frac{2 MT}{\delta}}\\
&\leq \frac{3}{2} \sqrt{N_{j}},
\end{aligned}
\end{equation}
then, it follows that, with probability at least $1-\frac{\delta}{T}$, we can write, 
$$
 (\Sigma_{t}^{(i)})^{\frac{1}{2}}\left(\sum_{l=1}^{N_{i}} {\chi}_{l,t}^{(i)} {\chi}_{l,t}^{(i)\top}\right) (\Sigma_{t}^{(i)})^{\frac{1}{2}} \succeq  \frac{N_{i}}{4} \Sigma_{t}^{(i)},
$$
and, 
$$
\left\|\epsilon^\Theta_{j,i}\right\|\left\|{\Sigma}_{t}^{(j)}\right\|\left\|\sum_{l=1}^{N_{j}} \chi_{l,t}^{(j)} \chi_{l,t}^{(j)\top}\right\| \leq \frac{9 N_{j}}{4}\left\|\epsilon^\Theta_{j,i}\right\|\left\|\Sigma_{t}^{(j)}\right\|.
$$

Therefore, the proof is completed by applying the union bound for all $0 \leq t \leq T-1$ and $j \in [M]$.
\end{proof}
\end{proposition}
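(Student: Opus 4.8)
The plan is to \emph{whiten} the Gaussian regressors so that both inequalities reduce to two-sided eigenvalue estimates for sample covariance matrices of isotropic Gaussians, at which point Lemma~\ref{lemma:vershynin2010} applies directly. For each client $i$, time step $t$, and rollout $l$, recall $z_{l,t}^{(i)} \sim \mathcal{N}(0,\Sigma_t^{(i)})$, so that the whitened vector $\chi_{l,t}^{(i)} \triangleq (\Sigma_t^{(i)})^{-1/2} z_{l,t}^{(i)}$ is $\mathcal{N}(0, I_{n+p})$ and, crucially, the family $\{\chi_{l,t}^{(i)}\}_{l=1}^{N_i}$ is i.i.d.\ across rollouts for each fixed pair $(i,t)$. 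Since different rollouts are independent experiments, this per-time-step i.i.d.\ structure is exactly what makes Lemma~\ref{lemma:vershynin2010} usable, even though states at different times within one rollout are correlated. Lemma~\ref{lemma:covariance_matrix} guarantees $\Sigma_t^{(i)}\succ 0$, so the square roots and their inverses are well-defined.

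For the first inequality, I would regroup the Gram matrix by time step, writing $Z^{(i)}Z^{(i)\top} = \sum_{t=0}^{T-1}\sum_{l=1}^{N_i} z_{l,t}^{(i)} z_{l,t}^{(i)\top} = \sum_{t=0}^{T-1} (\Sigma_t^{(i)})^{1/2}\bigl(\sum_{l=1}^{N_i}\chi_{l,t}^{(i)}\chi_{l,t}^{(i)\top}\bigr)(\Sigma_t^{(i)})^{1/2}$. Lemma~\ref{lemma:vershynin2010} lower bounds the smallest eigenvalue of the inner isotropic sample covariance by $(\sqrt{N_i}-\sqrt{n+p}-\sqrt{2\log(2MT/\delta)})^2$; the sample-size hypothesis is precisely what forces the two subtracted terms to total at most $\tfrac12\sqrt{N_i}$, yielding $\sum_l \chi_{l,t}^{(i)}\chi_{l,t}^{(i)\top} \succeq \tfrac{N_i}{4} I$. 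Conjugating this PSD inequality by $(\Sigma_t^{(i)})^{1/2}$ preserves the Loewner order and gives the per-$t$ bound $\succeq \tfrac{N_i}{4}\Sigma_t^{(i)}$; summing over $t$ delivers the claim.

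For the second inequality, I would expand $\Delta^{(i)}Z^\top = \sum_{j\neq i}\epsilon^\Theta_{j,i}Z^{(j)}Z^{(j)\top}$, apply the triangle inequality over $j$ and over $t$, and use submultiplicativity of the spectral norm to factor out $\|\epsilon^\Theta_{j,i}\|$ and $\|\Sigma_t^{(j)}\|$, leaving $\|\sum_l \chi_{l,t}^{(j)}\chi_{l,t}^{(j)\top}\|$. Here the \emph{upper} eigenvalue estimate of Lemma~\ref{lemma:vershynin2010} gives $\sqrt{\lambda_{\max}} \le \sqrt{N_j}+\sqrt{n+p}+\sqrt{2\log(2MT/\delta)} \le \tfrac32\sqrt{N_j}$ under the same sample-size condition, hence an operator norm of at most $\tfrac{9N_j}{4}$. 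Assembling the factors reproduces the stated bound.

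The main obstacle, and the place demanding care, is the probabilistic bookkeeping rather than any single estimate. Lemma~\ref{lemma:vershynin2010} controls one fixed isotropic covariance at a time, so I would invoke it at confidence $1-\tfrac{\delta}{MT}$ for each pair $(j,t)$ with $j\in[M]$ and $0\le t\le T-1$, and then take a union bound over all $MT$ such events. On this single good event the minimum- and maximum-eigenvalue estimates hold simultaneously for every client and time step, so both displayed inequalities hold together with probability at least $1-\delta$. The remaining subtlety is to verify that the threshold $N_i \ge 8(n+p)+16\log\frac{2MT}{\delta}$ is exactly the rearrangement of $\tfrac12\sqrt{N_i} \ge \sqrt{n+p}+\sqrt{2\log\frac{2MT}{\delta}}$ obtained through $2(a^2+b^2)\ge(a+b)^2$, which is what converts the raw deviation bounds into the clean constants $\tfrac14$ and $\tfrac94$.
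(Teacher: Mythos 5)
Your proposal is correct and follows essentially the same route as the paper's proof: whitening $z_{l,t}^{(i)}$ into isotropic Gaussians, regrouping the Gram matrix by time step, invoking Lemma~\ref{lemma:vershynin2010} at confidence $1-\frac{\delta}{MT}$ to obtain the $\frac{N_i}{4}$ lower and $\frac{9N_j}{4}$ upper eigenvalue bounds under the stated sample-size threshold, and finishing with a union bound over all $(j,t)$ pairs. Your explicit remark that the i.i.d.\ structure holds across rollouts for each fixed $(i,t)$ (despite temporal correlation within a rollout) is a point the paper leaves implicit, but the argument is otherwise the same.
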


\begin{lemma}\label{lemma:Dean_fi_gi}
Let $f_{i} \in \mathbb{R}^{m}, g_{i} \in \mathbb{R}^{n}$ be i.i.d. random variables $f_{i} \sim \mathcal{N}\left(0, \Sigma_{f}\right)$ and $g_{i} \sim \mathcal{N}\left(0, \Sigma_{g}\right)$, for $i \in [M]$. Let $M \geq 2(n+m) \log \frac{1}{\delta}$. For any fixed $\delta>0$, it holds, with probability at least $1-\delta$,
$$
\left\|\sum_{i=1}^{M} f_{i} g_{i}^{\top}\right\| \leq 4\left\|\Sigma_{f}\right\|^{\frac{1}{2}}\left\|\Sigma_{g}\right\|^{\frac{1}{2}} \sqrt{M(m+n) \log \frac{9}{\delta}}$$
\begin{proof}
See Lemma 2.1 of \cite{dean2020sample} for details. 
\end{proof}
\end{lemma}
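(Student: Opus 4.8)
The plan is to reduce the general covariance case to isotropic Gaussians by whitening, and then to bound the spectral norm of the standardized cross-term by a covering argument combined with a sub-exponential tail bound. First I would write $f_{i}=\Sigma_{f}^{1/2}\tilde f_{i}$ and $g_{i}=\Sigma_{g}^{1/2}\tilde g_{i}$ with $\tilde f_{i}\sim\mathcal N(0,I_{m})$ and $\tilde g_{i}\sim\mathcal N(0,I_{n})$ i.i.d., so that $\sum_{i}f_{i}g_{i}^{\top}=\Sigma_{f}^{1/2}\big(\sum_{i}\tilde f_{i}\tilde g_{i}^{\top}\big)\Sigma_{g}^{1/2}$. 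Submultiplicativity of the spectral norm peels off the factor $\|\Sigma_{f}^{1/2}\|\,\|\Sigma_{g}^{1/2}\|=\|\Sigma_{f}\|^{1/2}\|\Sigma_{g}\|^{1/2}$, reducing the claim to showing $\|\sum_{i}\tilde f_{i}\tilde g_{i}^{\top}\|\le 4\sqrt{M(m+n)\log(9/\delta)}$ with probability at least $1-\delta$.

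For the isotropic bound I would pass from the supremum over the two unit spheres to a finite maximum over nets. Writing $S:=\sum_{i}\tilde f_{i}\tilde g_{i}^{\top}$, we have $\|S\|=\sup_{u\in S^{m-1},\,v\in S^{n-1}}u^{\top}Sv$. Choosing $\tfrac14$-nets $\mathcal N_{u}\subseteq S^{m-1}$ and $\mathcal N_{v}\subseteq S^{n-1}$ of cardinalities $|\mathcal N_{u}|\le 9^{m}$ and $|\mathcal N_{v}|\le 9^{n}$ (the standard bound $(1+2/\varepsilon)^{d}$ at $\varepsilon=\tfrac14$ is exactly where the constant $9$ in $\log(9/\delta)$ originates), the usual discretization inequality for a bilinear form gives $\|S\|\le \tfrac{1}{1-2\varepsilon}\max_{u\in\mathcal N_{u},\,v\in\mathcal N_{v}}|u^{\top}Sv|=2\max_{u,v}|u^{\top}Sv|$.

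The core estimate is the tail bound for a single fixed pair $(u,v)$. Here $u^{\top}Sv=\sum_{i=1}^{M}a_{i}b_{i}$ with $a_{i}:=u^{\top}\tilde f_{i}\sim\mathcal N(0,1)$ and $b_{i}:=v^{\top}\tilde g_{i}\sim\mathcal N(0,1)$, so each summand is a product of standard Gaussians, hence sub-exponential (for instance via $|a_{i}b_{i}|\le\tfrac12(a_{i}^{2}+b_{i}^{2})$, a sum of $\chi^{2}$ variables). Bernstein's inequality for independent sub-exponential variables then yields $\mathbb P(|\sum_{i}a_{i}b_{i}|\ge t)\le 2\exp(-c\min\{t^{2}/M,\,t\})$ for an absolute constant $c$. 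Taking a union bound over the at most $9^{m+n}$ net pairs and choosing $t$ of order $\sqrt{M(m+n)\log(9/\delta)}$ drives the total failure probability below $\delta$; the hypothesis $M\ge 2(m+n)\log(1/\delta)$ is precisely what guarantees $t\le M$ up to a constant, placing us in the Gaussian regime $t^{2}/M\le t$ of Bernstein's bound, so that the $\sqrt{M}$ scaling (rather than a linear-in-$M$ term) controls the estimate.

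The hard part will be the bookkeeping of absolute constants so that the covering factor $2$, the sub-exponential Bernstein constant, and the net cardinality combine to yield exactly the coefficient $4$ and the argument $9/\delta$ inside the logarithm. In particular one must verify that the $\chi^{2}$-type sub-exponential norm of $a_{i}b_{i}$ is bounded independently of $(u,v)$ (which holds since $a_{i},b_{i}$ are marginally standard) and that each summand is mean-zero, which follows from the independence of $f_{i}$ and $g_{i}$ and is what produces the $\sqrt{M}$ cancellation rather than an $O(M)$ drift. I would use the same covering technique that underlies Lemma~\ref{lemma:vershynin2010} for the analogous quadratic term, adapting the net machinery to the present bilinear form.
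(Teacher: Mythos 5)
Your proposal is correct and takes essentially the same route as the proof the paper relies on: the paper itself only cites Lemma 2.1 of \cite{dean2020sample}, whose proof is precisely your argument of whitening, $\tfrac14$-nets of cardinality $9^{m}$ and $9^{n}$ (the source of the $9$ in $\log\frac{9}{\delta}$), a sub-exponential tail for the Gaussian products $a_i b_i$, and a union bound, with the hypothesis $M \geq 2(m+n)\log\frac{1}{\delta}$ keeping the Chernoff bound in its quadratic regime. The only refinement in the cited proof, which resolves the constant bookkeeping you flag as the hard part, is that it uses the exact moment generating function $\E\left[e^{\lambda a_i b_i}\right]=(1-\lambda^2)^{-1/2}$ of a product of independent standard Gaussians rather than a generic-constant Bernstein inequality, and this is what pins down the explicit factor $4$.
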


\begin{proposition}\label{prop:WZ_bound}
For any fixed $\delta>0$, let $N_{i} \geq (4 n+$
$2 p) \log \frac{MT}{\delta}$. It holds, with probability at least $1-\delta$, that
$$
\left\|W Z^{\top}\right\| \leq  4 \sum_{i=1}^M \sigma_{i,w} \sqrt{N_{i}(2 n+p) \log \frac{9 MT}{\delta}} \sum_{t=0}^{T-1} \left\|(\Sigma_{t}^{(i)})^{\frac{1}{2}}\right\| .
$$

\begin{proof} From the definitions of ${W}$, and ${Z}$, we can write, 
$$
\begin{aligned}
\left\|{W} {Z}^{\top}\right\| &=\left\|\sum_{i=1}^M {W}^{(i)} Z^{(i)\top}\right\|\leq \sum_{i=1}^M \left\|\sum_{l=1}^{N_{i}} \sum_{t=0}^{T-1} w_{l,t}^{(i)} z_{l,t}^{(i) \top}\right\| \\
&\leq \sum_{i=1}^M \sum_{t=0}^{T-1} \left\| \sum_{l=1}^{N_{i}}  w_{l,t}^{(i)} z_{l,t}^{(i)\top}\right\| .
\end{aligned}
$$

Therefore, for a fixed $\delta>0$, and let $N_{i} \geq(4 n+2 p) \log \frac{MT}{\delta}$, we can apply Lemma \ref{lemma:Dean_fi_gi} to obtain, with probability at least $1-\frac{\delta}{MT}$,
$$
\left\|\sum_{l=1}^{N_{i}} w_{l,t}^{(i)} z_{l,t}^{(i)\top}\right\| \leq 4  \sigma_{i,w}\left\|(\Sigma_{t}^i)^{\frac{1}{2}}\right\| \sqrt{N_{i}(2 n+p) \log \frac{9N T}{\delta}}
$$
Thus, we can apply the union bound for all $0 \leq t \leq T-1$ and  $i \in [M]$ to complete the proof and write, with probability at least $1-\delta$, 
$$
\left\|W Z^{\top}\right\| \leq  4 \sum_{i=1}^M \sigma_{i,w} \sqrt{N_{i}(2 n+p) \log \frac{9 MT}{\delta}} \sum_{t=0}^{T-1} \left\|(\Sigma_{t}^{(i)})^{\frac{1}{2}}\right\| .
$$
\end{proof}
\end{proposition}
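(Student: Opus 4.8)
The plan is to bound $\|WZ^\top\|$ by splitting the full matrix product into per-client, per-timestep blocks and applying the noise--covariate concentration inequality of Lemma~\ref{lemma:Dean_fi_gi} to each block. First I would exploit the block structure of the batch matrices: since $W = [W^{(1)},\ldots,W^{(M)}]$ and $Z = [Z^{(1)},\ldots,Z^{(M)}]$, we have $WZ^\top = \sum_{i=1}^M W^{(i)}Z^{(i)\top} = \sum_{i=1}^M \sum_{l=1}^{N_i}\sum_{t=0}^{T-1} w_{l,t}^{(i)} z_{l,t}^{(i)\top}$. A triangle inequality across $i$ and $t$ then gives $\|WZ^\top\| \le \sum_{i=1}^M \sum_{t=0}^{T-1} \bigl\|\sum_{l=1}^{N_i} w_{l,t}^{(i)} z_{l,t}^{(i)\top}\bigr\|$, reducing everything to controlling each inner sum over the $N_i$ independent rollouts.

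The second step is to check that, for fixed $(i,t)$, the pairs $(w_{l,t}^{(i)}, z_{l,t}^{(i)})$ are i.i.d.\ across $l$ (rollouts are independent) and, crucially, that $w_{l,t}^{(i)}$ is independent of $z_{l,t}^{(i)}$. This is the one point needing care: writing $z_{l,t}^{(i)} = [x_{l,t}^{(i)\top}, u_{l,t}^{(i)\top}]^\top$, the recursive expansion of $x_{l,t}^{(i)}$ shows it depends only on $\{u_{l,s}^{(i)}, w_{l,s}^{(i)}\}_{s<t}$ and $x_{l,0}^{(i)}$, while the remaining component is the current input $u_{l,t}^{(i)}$; none of these involve the current noise $w_{l,t}^{(i)}$, which by the i.i.d.\ Gaussian assumptions is independent of all of them. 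Hence $w_{l,t}^{(i)} \sim \mathcal{N}(0,\sigma_{i,w}^2 I_n)$ and $z_{l,t}^{(i)} \sim \mathcal{N}(0,\Sigma_t^{(i)})$ are independent, which legitimizes applying Lemma~\ref{lemma:Dean_fi_gi} with $f_l = w_{l,t}^{(i)}$ (dimension $n$), $g_l = z_{l,t}^{(i)}$ (dimension $n+p$), $\Sigma_f = \sigma_{i,w}^2 I_n$, and $\Sigma_g = \Sigma_t^{(i)}$.

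Next I would invoke Lemma~\ref{lemma:Dean_fi_gi} with its failure probability set to $\delta/(MT)$. Its sample requirement becomes $N_i \ge 2\bigl((n+p)+n\bigr)\log\frac{MT}{\delta} = (4n+2p)\log\frac{MT}{\delta}$, which is exactly the hypothesis, and its conclusion reads $\bigl\|\sum_{l=1}^{N_i} w_{l,t}^{(i)} z_{l,t}^{(i)\top}\bigr\| \le 4\sigma_{i,w}\|(\Sigma_t^{(i)})^{1/2}\|\sqrt{N_i(2n+p)\log\frac{9MT}{\delta}}$, where I use $\|\Sigma_f\|^{1/2} = \sigma_{i,w}$ and $\|\Sigma_g\|^{1/2} = \|(\Sigma_t^{(i)})^{1/2}\|$. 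Finally I would take a union bound over the $MT$ events indexed by $i\in[M]$ and $0\le t\le T-1$, so that all block bounds hold simultaneously with probability at least $1-\delta$, and substitute them into the triangle-inequality estimate; the double sum then factors as $4\sum_{i=1}^M \sigma_{i,w}\sqrt{N_i(2n+p)\log\frac{9MT}{\delta}}\sum_{t=0}^{T-1}\|(\Sigma_t^{(i)})^{1/2}\|$, which is the claimed bound.

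The only genuinely delicate step is the independence argument in the second paragraph: verifying that the causal/conditioning structure makes $w_{l,t}^{(i)}$ independent of $z_{l,t}^{(i)}$, so that Lemma~\ref{lemma:Dean_fi_gi} applies to each block. The rest is routine bookkeeping, namely matching the lemma's dimension parameters to $(n, n+p)$ in both its sample condition and its constant, and tallying the $MT$ union-bound terms so the total failure probability is $\delta$.
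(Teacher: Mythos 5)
Your proposal is correct and follows essentially the same route as the paper's proof: the block decomposition $WZ^\top=\sum_i W^{(i)}Z^{(i)\top}$, the triangle inequality over $(i,t)$, Lemma~\ref{lemma:Dean_fi_gi} applied with failure probability $\delta/(MT)$ and parameters $\Sigma_f=\sigma_{i,w}^2 I_n$, $\Sigma_g=\Sigma_t^{(i)}$ (matching both the $(4n+2p)\log\frac{MT}{\delta}$ sample condition and the $\sqrt{N_i(2n+p)\log\frac{9MT}{\delta}}$ constant), and a union bound over the $MT$ events. Your explicit verification that $w_{l,t}^{(i)}$ is independent of $z_{l,t}^{(i)}$ (since $z_{l,t}^{(i)}$ depends only on $x_{l,0}^{(i)}$, past noises, and inputs up to time $t$) is a point the paper leaves implicit, and it is exactly the right justification for invoking the lemma.
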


\subsection{Proof of Theorem \ref{thm:fl_results}}

To prove  Theorem \ref{thm:fl_results} we start with, 
\begin{align*}
    \max \left\{\left\|\bar{A}-A^{\fl}\right\|,\left\|\bar{B}-B^{\fl}\right\|\right\} \leq \left\|\bar{\Theta}-\Theta^{\fl}\right\|.
\end{align*}
Using \eqref{eq:apendix_estimation_error} and the triangle inequality it gives,
\begin{align*}
\left\|\bar{\Theta}-\Theta^{(i)}\right\|\leq \left\|WZ^\top\right\|\left\|\left(Z Z^\top\right)^{-1}\right\|+\left\|\Delta^{(i)}Z^\top\right\| \left\|\left(Z Z^\top\right)^{-1}\right\|.
\end{align*}

Therefore, from Proposition \ref{prop:DeltaZ_bound}, we obtain, with probability $1-\delta$, 
\begin{equation}\label{eq:ZZ_bound}
    \left\|\left(ZZ^{\top}\right)^{-1}\right\| \leq \frac{4}{\lambda_{\min }\left(\sum_{i=1}^M N_{i} \sum_{t=0}^{T-1} {\Sigma}^{(i)}_t\right)}
\end{equation}
as long as $N_{i} \geq$ $\max \left\{8(n+p)+16 \log \frac{2 MT}{\delta}, (4 n+2 p) \log \frac{MT}{\delta}\right\}$, for all $i\in [M]$. Therefore, by using Proposition \ref{prop:DeltaZ_bound}, along with Proposition \ref{prop:WZ_bound} and \eqref{eq:ZZ_bound}, we are able to write the following expression, 
\begin{align*}
\left\|\bar{\Theta}-\Theta^{(i)}\right\|\leq 
 \frac{16\sum_{i=1}^M \sigma_{i,w} \sqrt{N_{i}(2 n+p) \log \frac{9 MT}{\delta}} \sum_{t=0}^{T-1} \left\|(\Sigma_{t}^{(i)})^{\frac{1}{2}}\right\|}{\lambda_{\min }\left(\sum_{i=1}^M N_{i} \sum_{t=0}^{T-1} {\Sigma}^{(i)}_t\right)}+\frac{9\sum_{i=1,\atop i\neq j}^{M} N_{i}\left\|\epsilon^\Theta_{j,i}\right\|\sum_{t=0}^{T-1} \left\|\Sigma_{t}^{(i)}\right\|}{\lambda_{\min }\left(\sum_{i=1}^M N_{i} \sum_{t=0}^{T-1} {\Sigma}^{(i)}_t\right)},
\end{align*}
where the right-hand side can be further bounded as, 
\begin{small}
\begin{align*}
\left\|\bar{\Theta}-\Theta^{(i)}\right\|\leq 
 \frac{16 \sqrt{(2 n+p) \log \frac{9 M T}{\delta}}\sqrt{\sum_{i=1}^M \sigma_{i,w}^2 \left(\sum_{t=0}^{T-1}\left\|({\Sigma}_{t}^{\fl})^{\frac{1}{2}}\right\|\right)^2}}{\sqrt{\sum_{i=1}^M N_i}\times \min_{i\in [M]}\lambda_{\min }\left(\sum_{t=0}^{T-1}{\Sigma}^{\fl}_t\right)}+\frac{9\left\|\epsilon^\Theta_{j,i}\right\| \sum_{j\neq i}\sqrt{\left(\sum_{t=0}^{T-1} \left\|{\Sigma}_{t}^{\fl}\right\|\right)^2}}{\min_{i\in [M]}\lambda_{\min }\left(\sum_{t=0}^{T-1}{\Sigma}^{\fl}_t\right)},
\end{align*} 
\end{small}

\noindent by applying the union bounds, rewriting the denominator with Jensen's inequality since the minimum eigenvalue is a concave function, and using the Cauchy–Schwarz inequality in the numerator. Thus, according to Assumption \ref{assumption:bnd_sys_heterogeneity}, we can write $\left\|\epsilon^\Theta_{j,i}\right\|\leq \epsilon$, that completes the proof for this theorem.

\subsection{Proof of Corollary \ref{corollary:client_update}}

Corollary \ref{corollary:client_update} is a direct result of Theorem \ref{thm:fl_results} when the \texttt{ClientUpdate} in \texttt{FedSysID} is selected from the updating rule of \texttt{FedAvg} and \texttt{FedLin}. To prove this corollary, we start with, 
\begin{align*}
    \max \left\{\left\|\bar{A}_R-A^{\fl}\right\|,\left\|\bar{B}_R-B^{\fl}\right\|\right\} \leq \left\|\bar{\Theta}_R-\Theta^{\fl}\right\|,
\end{align*}
where, 
\begin{align*}
  \left\|\bar{\Theta}_R-\Theta^{\fl}\right\| \leq \underbrace{\left\|\bar{\Theta}_R-\bar{\Theta}\right\|}_{(I)} + \underbrace{\left\|\bar{\Theta} -\Theta^{\fl}\right\|}_{(II)},
\end{align*}
by using triangle inequality. Thus, the results presented in \eqref{eq:corollary_FedAvg} and \eqref{eq:corollary_FedLin} are immediately checked by combining the proof of Theorem \ref{thm:fl_results}, for the term $(II)$, with the corresponding sub-linear and linear non-asymptotic convergence rates of \texttt{FedAvg} and \texttt{FedLin}, detailed in \citep{woodworth2020minibatch, mitra2021linear}, for the term $(I)$. Note that $\beta$ is the conditional number of problem \eqref{eq:FL}, which is a fixed constant.

\subsection{Additional Numerical Results}

The results detailed in Corollary \ref{corollary:client_update} highlight, with the term $\mathcal{O}(\frac{1}{KR})$, the sub-linear convergence of \texttt{FedSysID}, with respect to the number of global iterations $R$, when the \texttt{ClientUpdate} is performed according to the updating rule of \texttt{FedAvg}. Additionally, it shows that the estimation error of \texttt{FedSysID} converges linearly with complexity $\mathcal{O}(e^{-\beta R})$, when \texttt{ClientUpdate} is selected from the updating rule of \texttt{FedLin}. Therefore, to confirm the better convergence of \texttt{FedLin} when compared to \texttt{FedAvg}, we provide an experiment where $M=50$ clients, with dissimilarity $\epsilon=0.01$, generate $N_i=25$ rollouts of length $T=5$. Figure \ref{fig:FedLin_FedAvg} shows that the estimation error $e_r$ converges faster when \texttt{FedSysID} uses \texttt{FedLin} compared to the case when it uses \texttt{FedAvg}. Note that according to this figure, \texttt{FedAvg} requires around 60 global iterations more than \texttt{FedLin} to reach the same level of estimation error. 

\begin{figure}[H]
      \centering
     \includegraphics[width=0.5\textwidth]{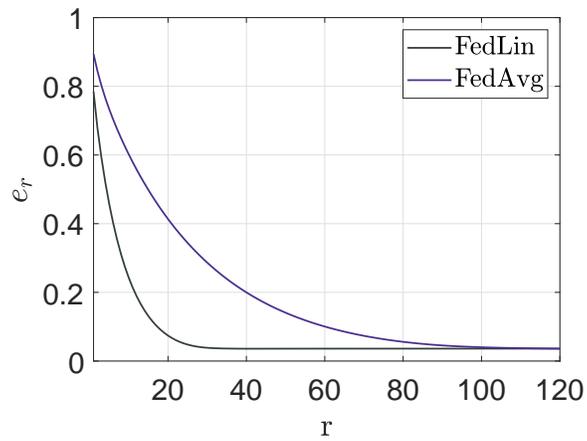}
      \caption{Performance comparison between \texttt{FedAvg} and \texttt{FedLin}, with $M=50$, $N_i=25$, $\epsilon=0.01$ and $K_i=10$.}
      \label{fig:FedLin_FedAvg}
\end{figure}

\end{document}